\theoremstyle{plain}
\newtheorem{theorem}{Theorem}[section]
\theoremstyle{definition}
\newtheorem{definition}{Definition}
\theoremstyle{remark}
\def\eqref#1{equation~\ref{#1}}
\def\floor#1{\lfloor #1 \rfloor}
\def\1{\bm{1}}
\DeclareMathAlphabet{\mathsfit}{\encodingdefault}{\sfdefault}{m}{sl}
\SetMathAlphabet{\mathsfit}{bold}{\encodingdefault}{\sfdefault}{bx}{n}
\newcommand{\rev}[1]{\textcolor{black}{#1}}
\DeclareMathOperator*{\argmax}{arg\,max}
\newcommand{\allteams}{\mathcal{T}}
\newcommand{\topkteamsone}{\mathcal{K}_{\allteams(\onevec)}}
\newcommand{\topkteamsgeneric}{\mathcal{K}_{\allteams(\wvec)}}
\newcommand{\alttopkteams}{\mathcal{S}}
\newcommand{\totteams}{M}
\newcommand{\topk}{k}
\newcommand{\amipwvec}{\tilde{\wvec}}
\newcommand{\team}{\widehat{\theta}}
\newcommand{\teamidx}{i}
\newcommand{\altteamidx}{j}
\DeclareMathOperator{\rank}{rank}
\newcommand{\onevec}{1_{N}}
\newcommand{\wvec}{w}
\newcommand{\designmatrix}{\mathbf{X}}
\newcommand{\btheta}{\bm{\theta}}
\newcommand{\bV}{\bm{V}}
\newcommand{\revision}[1]{\textcolor{black}{#1}}
\newcommand{\win}{W}
\newcommand{\loss}{L}
\newcommand{\tie}{T} 
\theoremstyle{plain}
\newtheorem{proposition}[theorem]{Proposition}
\crefname{proposition}{Proposition}{Propositions}
\Crefname{proposition}{Proposition}{Propositions}
\title{Dropping Just a Handful of Preferences Can Change Top Large Language Model Rankings}
\author{
Jenny Y. Huang$^{1,2}$\thanks{Equal Contribution},
Yunyi Shen$^{1,2}$\footnotemark[1], 
Dennis Wei$^{2,3}$, 
Tamara Broderick$^{1,2}$ \\
$^{1}$Department of Electrical Engineering and Computer Science, Massachusetts Institute of Technology \\
$^{2}$MIT-IBM Watson AI Lab \qquad
$^{3}$IBM Research \\
\texttt{\{jhuang9,yshen99,tbroderick\}@mit.edu}, \texttt{dwei@us.ibm.com}
}
\begin{document}

\maketitle

\begin{abstract}
We propose a method for evaluating the robustness of widely used LLM ranking systems---variants of a Bradley--Terry model---to dropping a worst-case very small fraction of preference data. Our approach is computationally fast and easy to adopt. When we apply our method to matchups from popular LLM ranking platforms, including Chatbot Arena and derivatives, we find that the rankings of top-performing models can be remarkably sensitive to the removal of a small fraction of preferences; for instance, dropping just $0.003\%$ of human preferences can change the top-ranked model on Chatbot Arena. Our robustness check identifies the specific preferences most responsible for such ranking flips, allowing for inspection of these influential preferences. We observe that the rankings derived from MT-bench preferences are notably more robust than those from Chatbot Arena, likely due to MT-bench's use of expert annotators and carefully constructed prompts. Finally, we find that neither rankings based on crowdsourced human evaluations nor those based on LLM-as-a-judge preferences are systematically more sensitive than the other.


\end{abstract}

\section{Introduction}
\label{sec:intro}
\begin{figure}[H]
    \centering
    \begin{minipage}[h]{0.42\textwidth}
        \revision{Open evaluation platforms like Chatbot Arena \citep{chiang2024chatbot} have, in large part due to their openness, become a gold standard for assessing the capabilities of leading LLMs via human preference. These open platforms are now widely used by top LLM developers and companies \citep{wsj2024aiindustry,bloomberg2025platforms} to evaluate and design new models and benchmarks \citep{grattafiori2024llama,hui2024qwen2}.
 Such platforms rely on crowdsourced pairwise battles and human votes to compute model rankings \citep{lee2023rlaif,bai2022training}.}
    \end{minipage}\hfill
    \begin{minipage}[h]{0.56\textwidth}
        \includegraphics[width=\linewidth]{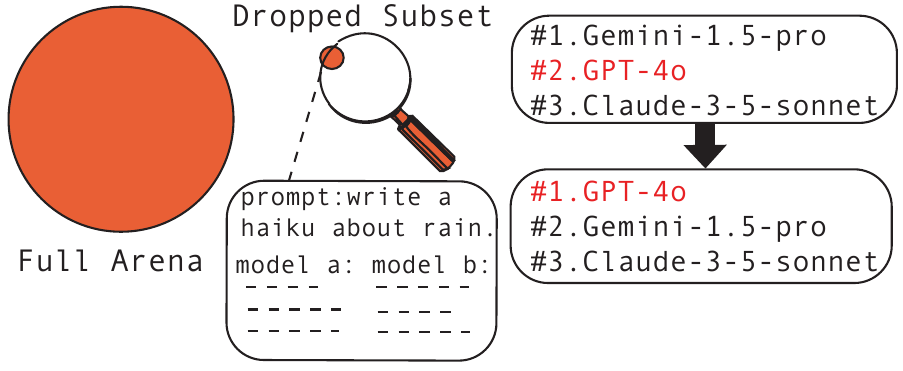}
        \caption{Our method (i) tests whether AI leaderboard rankings remain stable upon dropping small fractions of data and (ii) pinpoints the specific data points (e.g., preferences) that drive ranking flips.}
        \label{fig:paper-overview}
    \end{minipage}
\end{figure}
At the heart of these preference-based evaluation pipelines is the Bradley--Terry (BT) model \citep{bradley1952rank}, which is widely used to rank LLMs based on human feedback \citep{chiang2024chatbot}. The BT model is also used to train reward models for RLHF \citep{ouyang2022training,touvron2023llama,xu2024uncertainty,sun2025rethinking} and route queries to the most appropriate LLM or inference-time scaling strategy \citep{damani2024learning}.

A growing body of work has called into question the trustworthiness of LLM leaderboards, showing that they are vulnerable to adversarial attacks: a few hundred injected votes can change top rankings on Chatbot Arena \citep{min2025improving}, attackers can identify model outputs to systematically upvote or downvote targets \citep{huang2025exploring}, LLM-judges can be easily gamed \citep{zheng2024cheating,raina2024llm}, and issues such as data leakage or selective reporting further undermine leaderboard reliability \citep{singh2025leaderboard}.

In this work, we study a different type of untrustworthiness of LLM ranking systems. That is: \textit{``Will the top rankings \revision{from} LLM-evaluation platforms change upon dropping a very small fraction of the human (or AI) \revision{preference evaluations}?''} A positive answer would raise concerns about the stability and generalizability of rankings produced by such systems. Our notion of non-robustness differs from those of \citet{min2025improving,huang2025exploring,zhao2025challenges} in two major respects. First, it occurs at a different place in the process, at the data analysis step after data has been collected (including from malicious or apathetic users). Second, it does not require adversarial intent. 

Our question posed above motivates the need for a systematic way to assess the robustness of top rankings in BT-based evaluation systems to worst-case data dropping. However, a brute-force combinatorial search over all possible small subsets of data would be computationally infeasible for large-scale platforms like Chatbot Arena. 
%
So we instead turn to a recent line of works from statistics and theoretical computer science that design algorithms for assessing whether data analyses are robust to dropping a small, worst-case fraction of data points \citep{broderick2023automatic,kuschnig2021hidden,moitra2022provably,freund2023towards,shiffman2023could,nguyen2024mcmc,huang2024approximations,rubinstein2024robustness}. One such method, the Approximate Maximum Influence Perturbation (AMIP), estimates how much a statistic of interest could change if a worst-case subset of the data were dropped \citep{broderick2023automatic}. We extend these ideas to develop a fast approximation method for assessing the robustness of \revision{rankings from} LLM evaluation systems to \revision{worst-case} data-dropping. 

We apply our method to assess several popular LLM ranking platforms, including Chatbot Arena and derivatives \citep{chiang2024chatbot,zheng2023judging,miroyan2025search,vichare2025webdev,chou2025visionarena} and find most to be non-robust to dropping a very small fraction of votes. 

In \cref{sec:setup}, we formalize the setup for assessing worst-case data-dropping robustness in BT-based ranking systems, and in \cref{sec:method} we introduce a computationally efficient method for \revision{assessing} this form of robustness in practice (\cref{fig:paper-overview}). In \cref{sec:experiments}, we apply our robustness assessment method to investigate the robustness of several LLM leaderboards.




\section{Setup}
\label{sec:setup}

\paragraph{Human preference data.} We consider a preference-based ranking system akin to Chatbot Arena \citep{chiang2024chatbot}. There are in total $M$ language models. Any user can submit a prompt to be answered by a pair of language models. Let the $n$th such prompt be sent to models $i_n$ and $j_n$ for $i_n,j_n\in [M] := \{1,\ldots,M\}$ with $i_n\ne j_n$. The user then determines if the response from model $i_n$ is better than that of model $j_n$, or is tied. Suppose there are in total $N$ such comparisons; the $n$th comparison can be seen as a tuple $(i_{n}, j_{n}, y_{n})$, with $y_{n}\in \{\win,\loss, \tie\}$ for whether in the $n$th match, model $i_n$ is preferred over model $j_n$ (a win, $\win$), $j_n$ is preferred over $i_n$ (a loss, $\loss$), or the two models are similar (a tie, $\tie$). From a collection of preference data, the goal is to rank the language models.

\textbf{Ranking with the (unweighted) Bradley--Terry model.} 
\label{sec:bt-setup}
The Bradley--Terry (BT) model is a classical statistical model used to rank players from \textit{binary} match outcomes when there are only wins and losses, $y_n\in \{\win, \loss\}$. In this model, each player (\revision{e.g.}, language model), $i$, is associated with a \textit{BT score}, $\theta_{i}$, and the outcomes are modeled as 
\begin{equation}
    I_{y_{n}=\win}\sim \text{Bernoulli}(\sigma(\theta_{i_n}-\theta_{j_n})),
\end{equation}
where the sigmoid function $\sigma(x) = 1 / (1 + e^{-x})$ and $I$ is the indicator function. 
Note, since the ``winning'' probability depends on the difference between two players' scores rather than on their raw scores, the scores are identified only up to a constant additive term. There are different ways to avoid this identifiability problem \citep{wu2022asymptotic}. Chatbot Arena chooses to set \texttt{mixtral-8x7b-instruct-v0.1} as the reference model, assigning it a fixed score of $1{,}114$.
Chatbot Arena computes the BT-scores (i.e., the estimates of $\btheta=(\theta_1,\dots, \theta_M)$) for the unweighted BT-model by maximum likelihood, 
\begin{equation}
\begin{aligned}
\hat{\btheta} := 
&\argmax_{\btheta:\theta_1=0} \;
\sum_{n=1}^N \big( 
    I_{y_{n}=\win} \log \sigma(\theta_{i_n}-\theta_{j_n})  + I_{y_{n}=\loss} \log (1 - \sigma(\theta_{i_n}-\theta_{j_n}))
\big).
\end{aligned}
\label{eq:BTlikelihood}
\end{equation}

Finally, the \textit{rank} of a model is its position in the sorted list of models, $(\team_{(1)}, \ldots, \team_{(M)})$, ordered by their scores in descending order, so that $\team_{(1)}$ corresponds to the top-scoring and top-ranked model.\footnote{For ease of exposition in the main text, we assume there are no ties. In practice, we did not observe ties in Bradley--Terry-based point estimates. For rankings based on confidence intervals, we do observe and handle ties; see \cref{sec:sensitivity-confints} for details.}

\textbf{Ranking with the weighted Bradley--Terry model to handle ties.} The classic BT model cannot handle ties. To handle ties, Chatbot Arena adds weights to \cref{eq:BTlikelihood}, counting a tie as both a win and a loss \citep{chiang2024chatbot}.\footnote{``Chatbot Arena Leaderboard Calculation (Bradley--Terry model)'' Colab notebook:\url{https://colab.research.google.com/drive/1KdwokPjirkTmpO_P1WByFNFiqxWQquwH}.} In the weighted BT model, one specifies a weight for wins and losses, $w_{\win\loss}$, and a weight for ties $w_{\tie}$. That is, we estimate BT scores by maximizing the weighted likelihood,
\begin{align}
    \nonumber
    \hat{\btheta} := 
    &\argmax_{\btheta:\theta_1=0} \;
    \sum_{n=1}^N \big[ 
        w_{\win\loss}I_{y_{n}=\win} \log \sigma(\theta_{i_n}-\theta_{j_n})  + w_{\win\loss}I_{y_{n}=\loss} \log (1 - \sigma(\theta_{i_n}-\theta_{j_n}))\\
        \label{eq:weightedBTlikelihood}
        &\qquad\qquad\quad + w_{\tie}I_{y_{n}=\tie}\left\{\log \sigma(\theta_{i_n}-\theta_{j_n})+\log (1 - \sigma(\theta_{i_n}-\theta_{j_n}))\right\}
    \big].
\end{align}
As done on Chatbot Arena, we use $w_{\win\loss}=2$ and $w_{\tie}=1$. This choice can be interpreted as each win or loss counting as two matches of the same outcome, and a tie counting as one win and one loss. They also suggested an alternative treatment of dropping all ties and using the unweighted BT model, which corresponds to $w_{\win\loss}=1$ and $w_{\tie}=0$.

\textbf{Postprocessing in Chatbot Arena.}
\label{sec:postprocessing-chatbot-arena}
Chatbot Arena applies a linear transformation to the learned BT scores \citep{lmarena2024notebook}. They use \(\texttt{SCALE} = 400\),  \(\texttt{INIT\_RATING} = 1{,}000\), and a further shift \(\texttt{ANCHOR\_SHIFT}\) to produce the displayed scores:
\[
\texttt{ELO}_i = \texttt{SCALE} \cdot \widehat{\theta}_i + \texttt{INIT\_RATING} + \texttt{ANCHOR\_SHIFT}.
\]
The final constant (\texttt{ANCHOR\_SHIFT}) shifts all the $\texttt{ELO}_i$ scores so that a specific reference model has a certain score. Chatbot Arena uses \texttt{mixtral-8x7b-instruct-v0.1} as the reference model, assigning it a fixed score of $1{,}114$. We use the same reference model in our analysis of Chatbot Arena; however, we assign the model a fixed score of $0$ (a design choice that does not impact rankings). 
We note that the affine transformation does not affect model rankings since it is strictly monotonic and does not affect our proposed procedure since linear transformations can commute with first-order Taylor expansion.


\textbf{Setup for Data-Dropping.} We study whether dropping a small fraction $\alpha \in (0,1)$ (e.g., $\alpha = 0.01$) of the preference data can change the ordering of the estimated BT scores. \citet{broderick2023automatic} define the \textit{Maximum Influence Perturbation} as the largest possible change induced in a quantity of interest by removing at most $100\alpha\%$ of the data.

Let $w_n$ denote a weight on the $n$th data point, and collect these into a vector $\wvec := (w_1, ..., w_N)$. Define the weighted estimator as
\begin{align}
\nonumber
\hat{\btheta}(\wvec) :=  
    &\argmax_{\btheta:\theta_1=0} \;
    \sum_{n=1}^N w_{n}\big[ 
        w_{\win\loss}I_{y_{n}=\win} \log \sigma(\theta_{i_n}-\theta_{j_n}) + w_{\win\loss}I_{y_{n}=\loss} \log (1 - \sigma(\theta_{i_n}-\theta_{j_n}))\\
        &\quad\quad\qquad\qquad\quad + w_{\tie}I_{y_{n}=\tie}\left\{\log \sigma(\theta_{i_n}-\theta_{j_n})+\log (1 - \sigma(\theta_{i_n}-\theta_{j_n})\big)\right\}
    \big].
\end{align}
Setting $\wvec = \onevec$ (the all-ones vector) recovers the BT scores computed on the full data (e.g., the original arena), while setting $w_n = 0$ corresponds to dropping the $n$th data point (e.g., \revision{a matchup}). We define the set of all weight vectors corresponding to dropping at most an $\alpha$-fraction of the data as follows.

\begin{definition}[Feasible Drop Set]
\label{def:drop-set}
Let $W_\alpha := \{ \wvec \in \{0,1\}^N : \sum_{n=1}^N (1 - w_n) \leq \alpha N \}$ be the set of all binary weight vectors indicating subsets where at most $100\alpha\%$ of the data has been dropped.
\end{definition}

We begin by considering \rev{the ordering of BT scores between a pair of players, $i$ and $j$.}
Without loss of generality, we assume\footnote{If this assumption does not hold, the identities of $i$ and $j$ can be swapped.} that player $i$ has the higher estimated BT score on the full data:
\[
\team_i(\onevec) \geq \team_j(\onevec).
\]
We are interested in whether this ordering can be reversed by dropping at most an $\alpha$-fraction of the data.



We now extend this notion to \rev{an} arena with $M$ players, for any $M \geq 2$. Let $\allteams(\wvec) := \{ \team_{\teamidx}(\wvec) \}_{\teamidx=1}^{\totteams}$ denote the set of BT scores under weighting $\wvec$. Let $\rank\left[\team_{\teamidx}(\wvec); \allteams(\wvec)\right]$ denote the rank of a model under the weighting $\wvec$.

\begin{definition}[Top-$k$ Set]
We define the \textit{top-$\topk$ set} under a data weighting $\wvec$ as the set of players whose scores rank among the top $\topk$:
\begin{align}
\topkteamsgeneric := \left\{ \teamidx :  \rank\left[\team_{\teamidx}(\wvec); \allteams(\wvec)\right] \le \topk \right\}.
\label{eqn:top-k-definition}
\end{align}
\end{definition}

\begin{definition}[Top-$k$ Data-Dropping Robustness] 
An arena is \textit{top-$\topk$ robust at level $\alpha$} if no $\alpha$-fraction subset of data can be dropped to change the top-$\topk$ set for the full data. That is,
\begin{align}
\left\{ \wvec \in W_\alpha : \topkteamsone \neq \topkteamsgeneric \right\} = \emptyset.
\label{eqn:robust-k-model-arena}
\end{align}
\label{defn:top-k-robustness}
\end{definition}
Notice that 
\Cref{eqn:robust-k-model-arena} \rev{is} nontrivial to directly verify; to check directly, we could test out dropping all possible small-fraction subsets of the arena, but this combinatorial operation is computationally intractable in practice.

In \cref{sec:method}, we show that verifying whether \cref{eqn:robust-k-model-arena} holds can be reduced to checking the robustness of a series of pairwise comparisons. Specifically, top-$\topk$ robustness as in \rev{\cref{defn:top-k-robustness}} can be checked by assessing whether there exists a reweighting $\wvec \in W_\alpha$ that flips the ranking of a pair $(i, j)$ such that $i$ is inside and $j$ is outside the top-$\topk$ set. We then can test if such flipping can happen by: (a) using a continuous approximation of the discrete weights $w$ (also known as ``approximate data-dropping") to identify a promising candidate subset of influential preferences, (b) dropping these, (c) recomputing the BT-based rankings, and (d) observing whether the rankings change. We detail this procedure in \cref{sec:method}.

\section{Proposed method}
\label{sec:method}
Recall that our goal is to evaluate the robustness of the rankings induced by a BT-model when a small fraction of matches (e.g., evaluations) is removed from the arena. To this end, we introduce a method based on checking the robustness of pairwise BT score differences. 
\rev{We provide pseudocode for our method in \cref{alg:isRankingRobust} and explain its steps below.}

In \rev{\cref{prop:topk}}, we show that a top-$\topk$ set can be characterized by considering a set of pairwise comparisons. \rev{This result allows us to check top-$\topk$ robustness by checking pairwise robustness of all models inside the top-$\topk$ set against all models outside of this set. In the case that there does exist such a pair of models (one inside and one outside the top-$\topk$) whose rankings flip, then the top-$\topk$ set has changed, i.e., the arena is non-robust. In the case that there does not exist at least one such pair of models whose rankings can be flipped upon dropping a small fraction of preferences, then the top-$\topk$ set remains unchanged, i.e., the arena is top-$\topk$ robust.}

\rev{Given the equivalence between checking the robustness of the top-$\topk$ set and checking the robustness of the aforementioned series of pairwise player comparisons,} we propose a greedy algorithm to test whether the top-$\topk$ set is robust to worst-case data-dropping. 
Namely, we test the data-dropping robustness of all players in the top-$\topk$ set against all players outside of the top-$\topk$ set. 

Before that, we describe what it means for a given pair of player scores, $(\team_i(\wvec), \team_j(\wvec))$, to be data-dropping robust. Without loss of generality, we assume throughout this section that player $i$ has the higher estimated BT score on the full data.

\textbf{Pairwise Robust.} Given a pair of \rev{players}, $(i,j)$, we say that the scores for this pair, $(\team_i(\wvec), \team_j(\wvec))$, are robust to small-fraction data-dropping at level-$\alpha$ if 
\begin{equation}
    \{ \wvec \in W_\alpha : \team_i(\wvec) < \team_j(\wvec) \} = \emptyset.
    \label{eqn:pair-is-robust}
\end{equation}

\textbf{Top-$\topk$ Robust.}
\label{sec:top-k-robustness-via-pairwise-comparisons}
Recall that an arena is top-$k$ robust at level-$\alpha$ if there does not exist a reweighting, $\wvec \in W_\alpha$, such that $\topkteamsone \neq \topkteamsgeneric$. 
Using the line of logic in \cref{prop:topk}, this is equivalent to showing that, $\forall \; (i,j)$ where  $i \in \topkteamsgeneric$ and $j \notin \topkteamsgeneric$, the pair $(\team_i(\wvec), \team_j(\wvec))$ is robust. Namely, if every comparison $(i, j)$ in this set of pairwise comparisons stays the same (after reweighting), then the top-$\topk$ set also stays the same (see \cref{prop:topk} for a detailed proof).

We now provide a method for checking the robustness of pairwise comparisons.

\textbf{Method for Checking Pairwise Robustness.} In \cref{eqn:pair-is-robust}, we are interested in checking whether there exists a small fraction of evaluations, $\wvec \in W_\alpha$, that can be dropped to change the sign of a difference in BT scores. Without loss of generality, we will assume that the sign of the difference of BT scores fit to the full data is positive (e.g., $[\team_i(\onevec) - \team_j(\onevec)] > 0$, meaning that model $i$ has a higher score than model $j$).

To evaluate the robustness of the sign of $[\team_i(\onevec) - \team_j(\onevec)]$ to dropping a small fraction of matches, we adopt a recently-developed method from the statistics literature known as the \textit{Approximate Maximum Influence Perturbation} \citep{broderick2023automatic} (see \cref{sec:amip-approximation} for a more detailed discussion on how we adapt this method to our problem setup). This method approximates the maximal directional change in a statistic, e.g., $[\team_i(\onevec) - \team_j(\onevec)]$, that can result from dropping a worst-case subset of data points (in our case, evaluations) of size at most $\floor{\alpha N}$. This method allows us to sidestep running an expensive combinatorial search over all data subsets for the worst-case subset of matches to drop, a procedure that is computationally prohibitive for large LLM evaluation platforms like Chatbot Arena. 

The optimization problem implied by the Maximum Influence Perturbation problem in our particular case is shown below,
\begin{align}
\max_{\wvec \in W_{\alpha}} \left( \left[\team_i(\onevec) - \team_j(\onevec)\right] - \left[\team_i(\wvec) - \team_j(\wvec)\right] \right).
\label{eqn:comb-optimization}
\end{align}
We approximate this discrete optimization problem using AMIP approximation \citep{broderick2023automatic}; the idea is that, instead of solving the optimization directly, we first approximate the effect of dropping data by a first order Taylor expansion of the quantity $\team_i(\wvec) - \team_j(\wvec)$ over data weights $\wvec$ and then solve the approximated optimization problem. In \cref{sec:BTlogistic}, we provide a review of the general AMIP approximation, then formulate both the weighted and unweighted BT models as logistic regressions, and finally provide an explicit form of the approximation for both BT models.


Let the approximate solution to \cref{eqn:comb-optimization} that is returned by
AMIP be denoted as $\amipwvec$ (i.e., the set of data weights that are 0 at indices of data points that AMIP chooses to drop and 1 elsewhere). For a candidate pair of players, \((i, j)\), \rev{w}e check whether after dropping, $ [\team_i(\amipwvec) - \team_j(\amipwvec)] < 0$. In other words, we refit the BT-model upon leaving out the subset of impactful evaluations identified by AMIP and check whether leaving out this subset induces a sign change in the difference of BT scores for the pair, $(i, j)$. We say that the BT scores for a pair of players, \((i, j)\), are non-robust if the \textit{sign} of the difference in scores \textit{becomes negative} upon refitting under $\amipwvec$, (i.e., if $ [\team_i(\amipwvec) - \team_j(\amipwvec)] < 0$).

\textbf{Method for Checking Top-$\topk$ Robustness.} We now describe how we can fold our check for pairwise robustness into an overall routine for checking for top-$\topk$ robustness. 

Recall from earlier in \cref{sec:top-k-robustness-via-pairwise-comparisons} that we can check top-$\topk$ robustness by checking pairwise robustness for every comparison $(i,j)$ where  $i \in \topkteamsgeneric$ and $j \notin \topkteamsgeneric$. This amounts to checking the pairwise robustness for at most \(k(M-k)\) pairs.

Thus, we check top-$\topk$ robustness by iterating over pairs of players. Note that, when checking the robustness of a given pair $(i,j)$, we allow matches between any two models (not only $(i,j)$) to be dropped. Since we only need to find one non-robust pair to render the set non-robust, not all pairs need to be checked. To save on compute, we take a greedy approach and start with comparing the most closely-ranked pairs between the top-\(k\) ranked players and the remaining \(M-k\) players, where ``closeness'' is quantified using the absolute difference in BT scores fit on the full data;\footnote{\revision{The robustness of the relative ranking of two players is correlated with the proximity of their BT scores as seen in \cref{fig:robustness-vs-score-gap}.}} pairs with smaller BT-score gaps are more likely to exhibit data-dropping non-robustness. Upon finding any single pair that is pairwise non-robust at an $\alpha$-level, the procedure terminates early and returns the corresponding players and the indices of the dropped evaluations. We say that an arena is $\alpha$-level top-$\topk$ robust if there does not exist a pair of players $(i,j)$, where $i \in \topkteamsgeneric$ and $j \notin \topkteamsgeneric$, that are $\alpha$-level pairwise non-robust. While our method uses an approximation to \textit{identify} the influential preferences; it then performs an exact recomputation of the Bradley--Terry scores with the identified preferences removed. As a result, all \rev{non-robustness reported in this paper is} definitive: when we state that dropping $100\alpha\%$ of preferences changes the ranking, we have explicitly verified that the ranking does in fact change upon removal of the surfaced subset. \rev{However, the algorithm may not catch all cases of non-robustness (i.e., false negatives are possible). See \cref{sec:false-negatives} for an extended discussion on the possibility of false negatives.}

\begin{algorithm}[t]
\small %
\caption{\rev{Our Data-dropping Robustness Check on Rankings}}
\label{alg:isRankingRobust}
\begin{algorithmic}[1]

\STATE \rev{\textbf{Input:} Dataset $\mathcal{D}$ (the collection of matches, e.g., preferences), rank $k$, drop fraction $\alpha$.}
\STATE \rev{\textbf{Output:} (1) A determination of whether top-$k$ non-robustness was found. (2) If top-$k$ non-robustness is found, we additionally return the pair of players ($i,j$) whose rankings flipped, the differences in their scores pre- and post- data-dropping, and the most influential set $I$.}

\STATE

\STATE \rev{\(\triangleright\) \textbf{Fit a Bradley--Terry model on the full arena.}}
\STATE \rev{$\hat{\btheta}(\onevec) \gets \text{FitBTModel}(\mathcal{D})$}
\STATE \rev{Determine the top-$k$ set, $\topkteamsone$, from $\hat{\btheta}(\onevec)$.}

\STATE

\STATE \rev{\(\triangleright\) \textbf{Compute score gap for each player pair of interest.}}
\STATE \rev{$P \gets \{(i,j): i \in \topkteamsone,\, j \notin \topkteamsone\}$}
\FOR{\rev{each $(i,j) \in P$}}
    \STATE \rev{Compute score gap $\hat{\Delta}(\onevec)_{ij} \gets |\team_i(\onevec) - \team_j(\onevec)|$}
\ENDFOR
\STATE \rev{Sort pairs $(i,j)$ in $P$ by increasing $\Delta(\onevec)_{ij}$}

\STATE

\STATE \rev{\(\triangleright\) \textbf{Check pairwise robustness by choosing pairs in order of increasing score gap.}}
\FOR{\rev{each $(i,j)$ in sorted $P$}}

    \STATE \rev{\(\triangleright\) \textbf{Compute influence scores.}}
    
    \FOR{\rev{each datapoint (preference) $n$}}
        \STATE \rev{$\text{IF}_n(i) \gets$ influence score for datapoint $n$ on $\team_i(\onevec)$}
        \STATE \rev{$\text{IF}_n(j) \gets$ influence score for datapoint $n$ on $\team_j(\onevec)$}
        \STATE \rev{$\Delta_n(i, j) \gets \text{IF}_n(i) - \text{IF}_n(j)$}
    \ENDFOR

    \STATE

    \STATE \rev{\(\triangleright\) \textbf{Identify worst-case subset by sorting influence scores.}}
    \STATE \rev{Choose the $\floor{\alpha N}$ values of $\Delta_n(i, j)$ that are the largest in the negative direction, assuming that $\team_i(\onevec) - \team_j(\onevec) > 0$.}
    \STATE \rev{$I \gets$ indices corresponding to the $\floor{\alpha N}$ most negative $\Delta_n$ values.}

    \STATE

    \STATE \rev{\(\triangleright\) \textbf{Compute the AMIP-predicted score difference.}}
    \STATE \rev{$(\hat{\theta}_i(\wvec) -  \hat{\theta}_j(\wvec))_{\text{AMIP}} \gets (\team_i(\onevec) - \team_j(\onevec)) + \sum_{n \in I} \Delta_n$}

    \STATE

    \STATE \rev{\(\triangleright\) \textbf{Compute the exact refit for verification.}}
    \STATE \rev{$\hat{\btheta}(\wvec) \gets \text{FitBTModel}(\mathcal{D}_{-\alpha})$}, where $\mathcal{D}_{-\alpha}$ is the set of points in $\mathcal{D}$ except those with indices in $I$
    \STATE \rev{Compute new difference: $(\team_i(\wvec) - \team_j(\wvec))$}

    \STATE

    \IF{\rev{$\text{sign}((\team_i(\onevec) - \team_j(\onevec))) \neq \text{sign}(\team_i(\wvec) - \team_j(\wvec))$}}
        \STATE \rev{\textbf{return}}
        \rev{``Arena is $\alpha$-level top-$\topk$ non-robust'', $(i,j), (\team_i(\onevec) - \team_j(\onevec)), (\team_i(\wvec) - \team_j(\wvec)), I$}
    \ENDIF

\ENDFOR

\STATE

\STATE \rev{\textbf{return} ``Arena was not found to be $\alpha$-level top-$\topk$ non-robust''}

\end{algorithmic}
\end{algorithm}

\paragraph{Runtime.} 
\revision{The above procedure is fast for assessing the robustness of preference-based ranking systems. For example, we tested our method on historical preference datasets released by the Chatbot Arena project and hosted on Hugging Face \citep{chiang2024chatbot}.}
\revision{Specifically, we check} top-1 and top-5 robustness on a dataset of size \revision{around $50{,}000$ evaluations} in under $3$ minutes on a personal computer equipped with an Apple M1 Pro CPU at 3200 MHz and 16 GB of RAM.

\section{Experiments}
\label{sec:experiments}
Our analysis reveals that 1) dropping as little as $0.003\%$ of the evaluation data can flip the top-ranked model in popular LLM evaluation platforms (\cref{sec:experiments:sensitivity}), 2) crowdsourced human-evaluated systems are about as non-robust as AI-evaluated systems (\cref{sec:experiments:human_vs_llm}), 3) the LLM-generated responses of the dropped evaluations appear similar in content (\cref{sec:experiments:matches}), and 4) sensitivity depends on BT score margins (\cref{sec:experiments:score_margins}). Henceforth, for convenience, we use ``robustness'' as shorthand for robustness of a system's top-$\topk$ ranking to dropping a small fraction, $\alpha$, of the data.

\subsection{Data and Setup}
We run our robustness check on a variety of LLM Arenas, including Chatbot Arena \citep{chiang2024chatbot}, MT-bench \citep{zheng2023judging}, Search Arena \citep{miroyan2025search}, Webdev Arena \citep{vichare2025webdev}, and Vision Arena \citep{chou2025visionarena}. For more information about each arena, see \cref{sec:arenas}. Our analysis relies on historical preference datasets released by the Chatbot Arena project \citep{chiang2024chatbot} and publicly hosted on LMArena’s HuggingFace account. Each record represents a matchup consisting of two LLMs that answer the same prompt, the names of the two models, and the user label indicating preference for model A, model B, or a tie. Figure~\ref{figure:confint-cba-human} presents the Bradley--Terry scores of the top-10 models on Chatbot Arena.  

To compare the robustness of LLM arenas to more classical use cases of BT models, we also run our check on two sports datasets, namely NBA \citep{nbaelofivethirtyeight} and ATP tennis \citep{Sackmann2024TennisATP}. For details on the sports datasets, see \cref{sec:arenas}.

For each dataset, we assess top-$k$ robustness with $k \in \{1, 3, 5, 10, 20\}$, extending up to the maximum number of models present in the respective arena when fewer than $20$ models are present.

\subsection{Sensitivity of LLM Arenas}\label{sec:experiments:sensitivity}
\begin{table*}[ht]
\centering
\begin{tabularx}{\linewidth}{>{\centering\arraybackslash}X c >{\centering\arraybackslash}X >{\arraybackslash}X}
\hline
\textbf{Arena} & \textbf{Evaluator (Judge)} & \textbf{Number Dropped} & \textbf{Percentage Dropped} \\
\hline
Chatbot Arena & Human & 2 out of 57477 & 0.00348\% \\
Vision Arena & Human & 28 out of 29845 & 0.0938\% \\
NBA Games & NA & 17 out of 109892 & 0.0155\% \\
Chatbot Arena & LLM & 9 out of 49938 & 0.0180\% \\
Webdev Arena & Human & 18 out of 10501 & 0.171\% \\ 
Search Arena & Human & 61 out of 24469 & 0.253\% \\
\rowcolor{gray!20} MT-bench & LLM & 40 out of 2400 & 1.67\% \\
\rowcolor{gray!20} ATP Tennis & NA & 6 out of 278 & 2.16\% \\
\rowcolor{gray!20} MT-bench & Human & 92 out of 3355 & 2.74\% \\
\hline
\end{tabularx} 
\caption{Results of checking top-1 robustness of BT-scores 
on each of the arenas, listed in ascending order of robustness (from the least to the most robust). The ``Number Dropped'' column reports the number of preferences (matches) that are sufficient to flip the first and second-place models (players). The ``Percentage Dropped'' column shows this number as a percentage of the number of total preferences in the full arena. Datasets we found to be robust at an $\alpha$-level of \(1\%\) are colored in gray.}
\label{table:nonrobustness-table}
\end{table*}
We find many popular LLM arenas to be incredibly sensitive to data-dropping (see \cref{table:nonrobustness-table}). In particular, we find that dropping just two (\(0.003\%\) of) evaluations is enough to change the top-ranked model on Chatbot Arena from GPT-4-0125-preview to GPT-4-1106-preview; see the two surfaced prompts and response pairs in \cref{sec:dropped-preferences}. We then find that dropping just three ($0.005\%$ of) evaluations can change one of the models in the top-5 rankings (the 5th and 6th-ranked models changed). \revision{Surprisingly, GPT-4-1106-preview participated in the most matchups across the entire arena and GPT-4-0125-preview also participated in a sizable number of matchups, as shown in \cref{fig:model_occurrences_cba}, suggesting that data-dropping sensitivity cannot be attributed to a small sample size alone.}

\rev{In addition to reporting rankings based on point-estimate BT-scores, LMArena reports an approximate ranking based on the end points of bootstrap confidence intervals (see \citet{lmarena2025leaderboard,lmarena2024notebook,chiang2024chatbot}).}
\rev{Even with the bootstrap-based rankings, we still find arenas to be surprisingly sensitive to worst-case data-dropping. For instance, we surface arenas where the bootstrap-based ranking outputs a single top-ranked model, but upon small-fraction data dropping, the model becomes no longer the sole top-ranked model (see \cref{figure:confint-webdev} in \cref{sec:sensitivity-confints}). See \cref{table:confint-nonrobustness-table} in \cref{sec:sensitivity-confints} for more details on the sensitivity of LMArena rankings based on bootstrap confidence intervals.}

Out of the LLM arenas we analyze, MT-bench is the sole benchmark that is robust at an $\alpha$-level of $0.01$ (see \cref{table:nonrobustness-table}). Here, dropping 92 out of $3{,}355$ ($2.74\%$ of) evaluations changes the top model from GPT-4 to Claude-v1. Dropping 110 ($3.28\%$ of) matchups can change one of the models in the top-5 rankings (again, the 5th and 6th ranked models changed). There are several reasons that may lead MT-bench to be \revision{much} more robust than the other LLM arenas. MT-bench consists of 80 carefully-designed multi-turn questions intended to differentiate models on core capabilities such as math, reasoning, and writing, and annotated by expert annotators \citep{zheng2023judging}. In contrast, all other arenas in our analysis are large-scale crowdsourced platforms, which rely on user-submitted prompts and crowd-sourced preference judgments.
\subsection{Humans vs. LLM-as-a-Judge}
\label{sec:experiments:human_vs_llm}
\revision{Within arenas that used both human and LLM judges}, we find neither human-annotated nor LLM-annotated datasets to be clearly more sensitive than the other to worst-case data-dropping (see \cref{table:nonrobustness-table} and \cref{fig:human-vs-llm-judge}). 
For Chatbot Arena, we find that the human-annotated dataset is slightly more sensitive (required dropping fewer evaluations) for $k \in \{1, 5, 10, 20\}$ while the LLM-annotated dataset is slightly more sensitive for $k=3$ (see \cref{fig:human-vs-llm-judge}). In contrast, for MT-bench, the LLM-annotated dataset is more sensitive than the human-annotated dataset for all $k \in \{1, 3, 5\}$, perhaps due to the use of expert-human annotators.\footnote{We do not test $k \in {10, 20}$, as MT-bench includes only six models.} Taken together, we cannot conclude that rankings based on human preferences nor those based on LLM-as-a-judge preferences are systematically more sensitive than the other. 
\begin{figure}[t]
    \centering
    \includegraphics[width=0.55\linewidth]{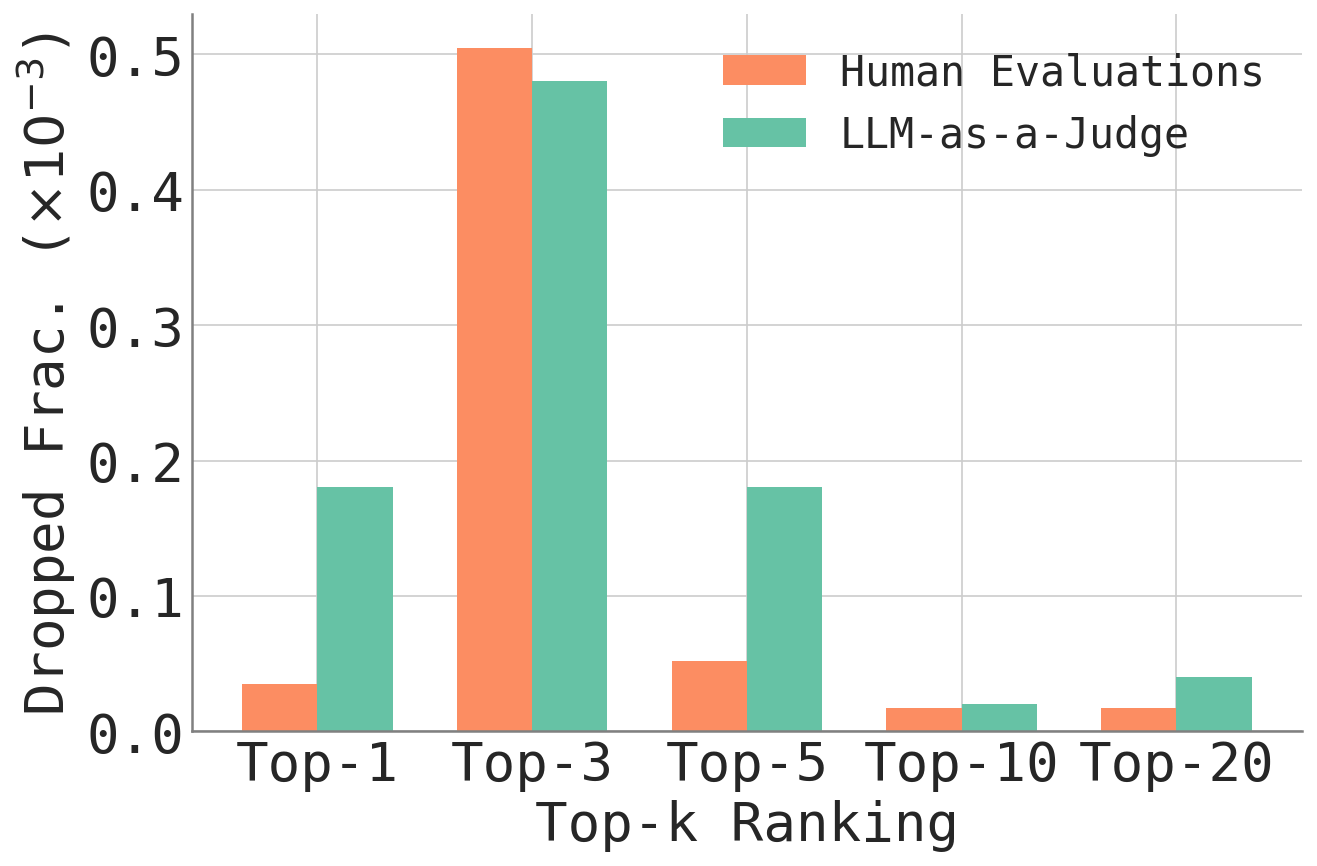}
    \caption{Each bar shows the fraction of data points dropped from Chatbot Arena that is sufficient to \revision{demote the BT score of a} model inside the top-$\topk$ to outside of the top-$\topk$ ($k \in \{1, 3, 5, 10, 20\}$). The orange bars correspond to human evaluators and green bars to LLM-as-a-judge evaluators.}
    \vspace{-0.6cm}
    \label{fig:human-vs-llm-judge}
\end{figure}
\subsection{Inspecting Dropped Preferences}
\label{sec:experiments:matches}
Our method can identify the prompts and response-pairs responsible for changing top leaderboard rankings. On Chatbot Arena, we find that dropping just \textit{two} human evaluations suffices to flip the rankings of GPT-4-1106-preview (originally ranked first) and GPT-4-0125-preview (ranked second). We provide these prompts and response pairs in \cref{sec:dropped-preferences}. \rev{A qualitative analysis of the prompt-response pairs (see \cref{sec:dropped-preferences}) shows that the two surfaced preferences correspond to cases that a strong judge model (GPT-5.1) identifies as atypical (i.e., different to what the ``typical'' user might prefer).} In both cases, GPT-4-1106-preview was judged to have lost against substantially lower-ranked models: Vicuna-13b (ranked 43rd) and Stripedhyena-nous-7b (ranked 45th). Dropping these two anomalous losses is enough to raise GPT-4-1106-preview’s position from second to first.

\section{Related Work}
\label{sec:relatedwork}
\subsection{Vulnerabilities in AI Leaderboards}
\label{sec:adversarial-attacks}
Despite its ease-of-use and widespread popularity, large-scale, community-driven platforms like Chatbot Arena are found to be vulnerable to adversarial attacks that can distort model rankings. \citet{min2025improving} demonstrate that Chatbot Arena is vulnerable to vote-rigging: by injecting just a few hundred manipulated votes (out of 1.7 million), attackers can significantly change the top model rankings.  Similarly, \citet{huang2025exploring} find that an attacker can accurately identify which model produced a response on Chatbot Arena, and use that to systematically upvote or downvote a target model. They propose several defenses (e.g., authentication, rate limits, malicious-vote detection) that make the leaderboard more robust to adversarial agents. Injected votes may be especially easy to construct on LLM-as-a-judge systems, as recent works show that LLM judges can be gamed in systematic ways \citep{zheng2024cheating,raina2024llm}. Beyond vote-rigging, \citet{singh2025leaderboard} identify other issues such as data leakage and private testing practices that allow large, proprietary model developers to selectively report the best-performing versions of their models on the arena. \citet{zhao2025challenges} present a case study showing that model rankings can shift when a fraction of votes comes from apathetic or arbitrary annotators. Their analysis finds that replacing $10\%$ of votes with uniform $\{0,1\}$ labels can move two models by up to five ranks. In contrast, we do not alter votes but instead demonstrate that rankings can change by removing an alarmingly small fraction ($0.003\%$) of the votes. More importantly, while \citet{zhao2025challenges} present a case study focused on the rankings of three specific test models, we develop a systematic method to evaluate the robustness of BT-based ranking systems under worst-case data dropping, which also identifies the specific prompt–response pairs driving ranking flips. \textcolor{black}{Beyond pairwise-preference-based rankings, past works have pointed out the fragility of LLM ranking systems based on absolute performance scores. \citet{perlitz2024efficient} show that rankings derived from the Mean Win Rate (MWR) metric (a variant of the Borda count) can be gamed to increase the ranking of a target model by evaluating numerous models that are almost equal to, but slightly weaker than, the target model, thereby inflating the target model's ranking. The authors suggest allocating more evaluation resources for models ranked at the top of the leaderboard, where the rankings may carry more weight.} Finally, while all \textcolor{black}{pairwise-preference-based} related works in this section focus on Chatbot Arena, we extend our analysis to other domains (vision, web design, search, and multi-turn dialogue) and find the leaderboard rankings on these platforms to be similarly non-robust.

\subsection{Data-dropping Robustness} 
\label{sec:data-dropping-robustness}
A growing body of works in statistics and theoretical computer science develops algorithms for assessing whether data analyses are robust to dropping a small, worst-case fraction of the data \citep{broderick2023automatic,kuschnig2021hidden,moitra2022provably,freund2023towards,nguyen2024mcmc,huang2024approximations,rubinstein2024robustness}. To our knowledge, only one prior work has investigated this question in the context of ranking systems: \citet{shiffman2023could} study the robustness of rankings in gene set enrichment analysis, showing that dropping just a few cells can alter the ranking of p-values derived from the hypergeometric test. In contrast, our work examines ranking robustness in a BT-based ranking system. While \citet{shiffman2023could} analyze p-value rankings, we analyze preference-based rankings of LLMs, extending AMIP \citep{broderick2023automatic} to study the robustness of BT-based ranking systems.

\section{Discussion}
\label{sec:discussion}
Crowdsourced LLM evaluation platforms like Chatbot Arena offer a way to rank LLMs by aggregating preferences over responses to open-ended prompts. There is good reason that this setup has been widely-adopted: it is easy to scale, doesn't require expert annotators, and enables the aggregation of many prompts and judgments across a wide range of users \citep{zheng2023judging,don2024future}.

In theory, this aggregation helps average out individual annotator variability and yields a signal that is generalizable. However, in practice, we find that model rankings can depend on just a small handful of human (or LLM) evaluations. Thus, we encourage users of leaderboards and benchmark contests to run our method to investigate the fragility of crowdsourced LLM evaluation platforms before publishing results. 

\rev{Sensitivity to worst-case data-dropping is often indicative of low signal-to-noise in the underlying data \citep{broderick2023automatic}; to help increase signal-to-noise, we recommend three different design-related improvements that AI arenas could make. (1) Collect richer forms of feedback beyond binary preferences (e.g.,} asking for evaluators' confidence levels \citep{mendez2022eliciting}.\footnote{The weighted logistic regression model used by Chatbot Arena can easily be extended to take in confidence ratings on top of binary preferences. One could imagine implementing this through encoding the confidence rating as a weight in the Win-Counts matrix described in the ``Chatbot Arena Leaderboard Calculation (Bradley--Terry model)'' Colab notebook.} \rev{(2) Design more discriminative prompts. Arenas could incorporate a prompt-filtering system to identify and remove uninformative prompts, or} create tools to identify prompts requiring specialized knowledge in order to route them to appropriate evaluators \citep{don2024future}. \rev{\citet{chiang2024chatbot} perform topic-modeling of the prompts submitted to Chatbot Arena. Their top-16 topics include ``Poetry Writing Prompts'' and ``Movie Recommendations and Ratings.'' The subjective nature of such topics may make differentiation between top models less meaningful. (3) Ensure higher-quality preference annotations. Arenas could use} mediators to perform fine-grained assessments of crowdsourced responses \citep{don2024future}, and categorize prompts by instruction type (e.g., factual recall, creative generation) to promote more fine-grained model comparisons within categories \citep{chia2023instructeval}.

A complementary line of work on creating high-quality synthetic benchmarks argues that separability---requiring performance gaps between models to be wide enough for leaderboard trends to remain stable under subsampling---should be a main design criterion \citep{li2024autobencher}. At the same time, our findings may suggest that apparent leaderboard differences may be artifacts of noise in the evaluation process rather than genuine performance gaps, which cautions against treating AI leaderboard rankings as definitive indicators of differences in model performance.

\subsubsection*{Acknowledgments}
\label{sec:ack}
This work was supported in part by an ONR Early Career Grant, the MIT-IBM Watson AI Lab, the NSF TRIPODS program (award DMS-2022448), a MachineLearningApplications@CSAIL Seed Award, and the Amazon AI Research Innovation Fellowship. We thank Hao Sun from the University of Cambridge and Google Deepmind for helpful initial discussions and for pointers to references.

\bibliography{references}
\bibliographystyle{iclr2026_conference}

\appendix
\section*{Appendix}



\section{\rev{Uncertainty Quantification}}
\label{sec:uncertainty-quantification}
\subsection{\rev{Sensitivity of LLM Arena Rankings Based on Bootstrap Confidence Intervals}}\label{sec:sensitivity-confints}
\rev{In addition to reporting rankings based on point-estimate BT-scores, LMArena reports an approximate ranking based on the end points of bootstrap confidence intervals (see \citet{lmarena2025leaderboard,lmarena2024notebook,chiang2024chatbot}). Specifically, \citet{chiang2024chatbot} computes bootstrap-confidence-interval-based rankings, which we will henceforth refer to as \textit{bootstrap-based rankings}, as
\begin{equation}
\begin{aligned}
    R_m = 1 + \sum_{m' \in [M]} \mathbf{1}\!\left\{ \inf C_{m'} > \sup C_m \right\},
\end{aligned}
\label{eq:bootstrap-based-ranking}
\end{equation}
where \(R_m\) denotes the rank and \(C_m\) the bootstrap confidence interval of model \(m\). Under this scheme, a model's ranking increases by one for every other model whose lower confidence-interval endpoint exceeds the upper endpoint of the model in question (see \cref{eq:bootstrap-based-ranking}). In other words, a model, $m$, is ranked below all models whose performance is significantly higher according to non-overlapping bootstrap confidence intervals.
This definition (see \cref{eq:bootstrap-based-ranking}) induces a set-valued ranking: multiple models may share the same ranking whenever their confidence intervals overlap with one another. Thus, a bootstrap-based ``rank’’ corresponds often to a set of statistically indistinguishable models, rather than a single model.}

\rev{In the bootstrap-based ranking setting, we follow the same notion of top-$k$ robustness introduced in \cref{defn:top-k-robustness}. An arena is deemed top-$k$ robust at level-$\alpha$ if no $\alpha$-fraction subset of data can be dropped to change the top-$k$ set of models. The only modification under the bootstrap-based ranking scheme is that each ``rank'' now corresponds to a set of statistically indistinguishable models. Thus, we regard the top-$k$ set as having changed whenever any model is added to or removed from this set.}

\begin{figure}[t]
    \centering

    \begin{subfigure}{0.9\linewidth}
        \centering
        \includegraphics[width=\linewidth]{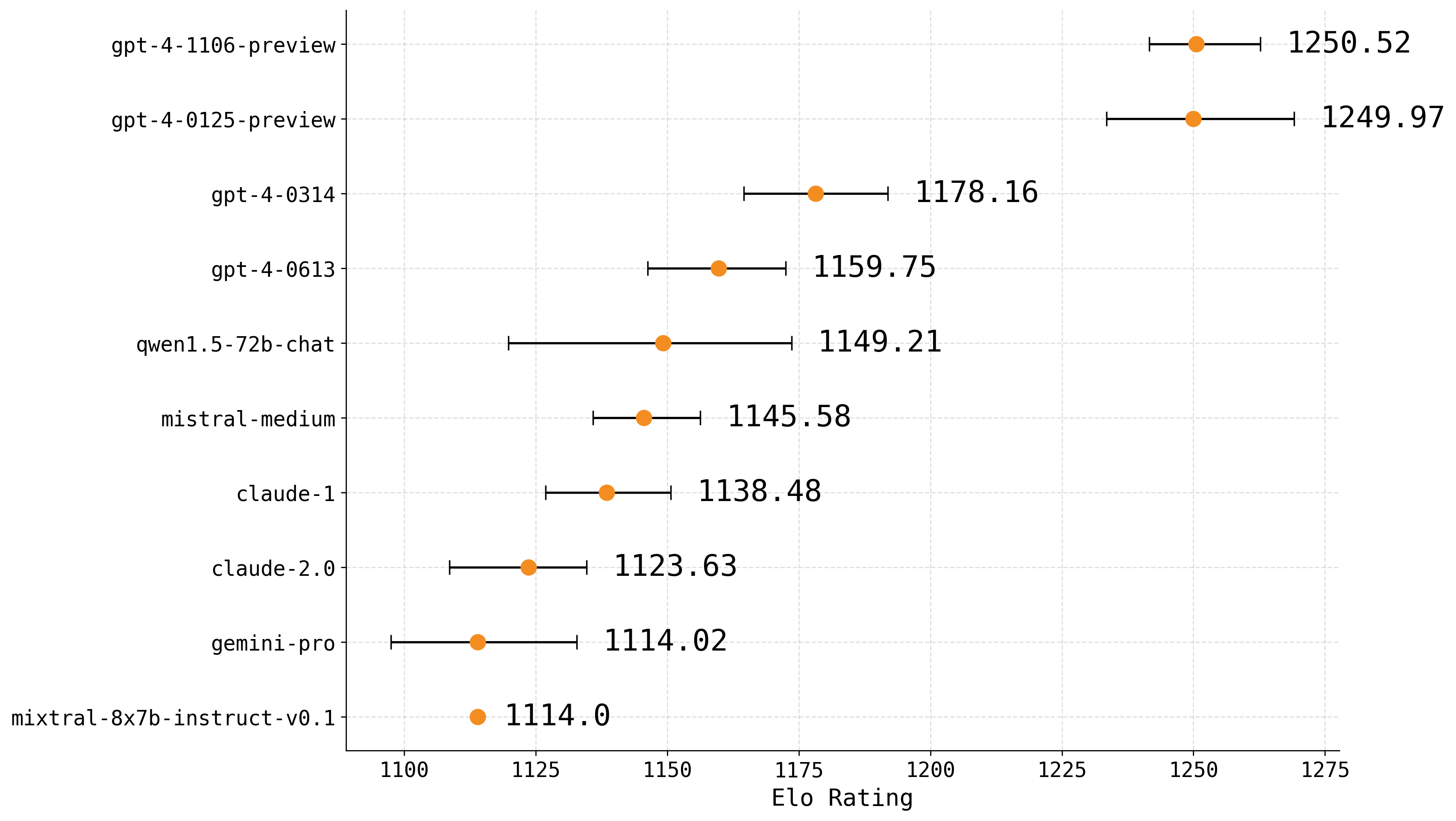}
        \caption{Original.}
    \end{subfigure}

    \vspace{0.5em} 

    \begin{subfigure}{0.9\linewidth}
        \centering
        \includegraphics[width=\linewidth]{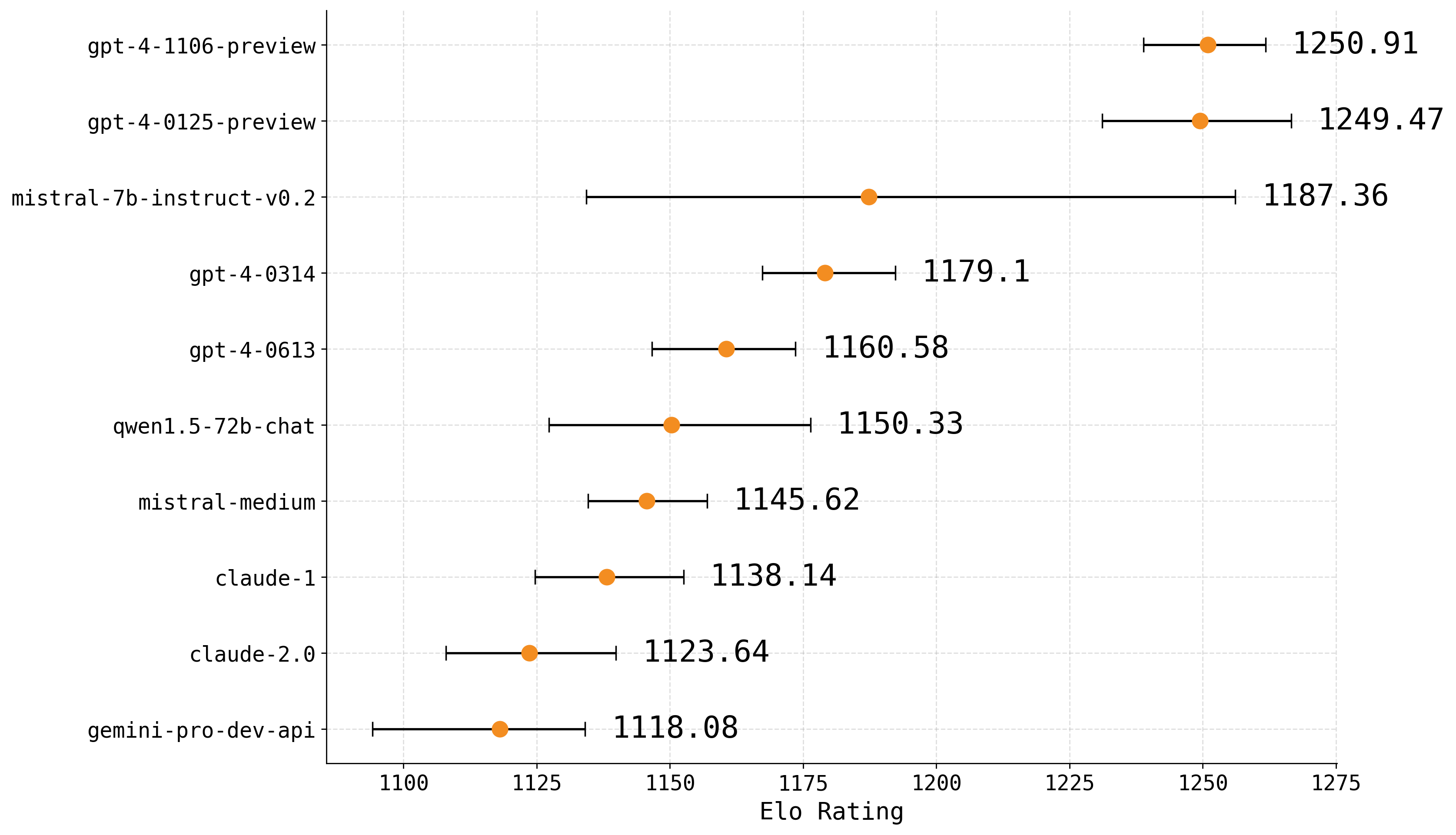}
        \caption{Post-data-dropping.}
    \end{subfigure}

    \caption{95\% Bootstrap-confidence-interval-based rankings on Chatbot Arena (Human Judge).}
    \label{figure:confint-cba-human}
\end{figure}

\begin{figure}[t]
    \centering

    \begin{subfigure}{0.9\linewidth}
        \centering
        \includegraphics[width=\linewidth]{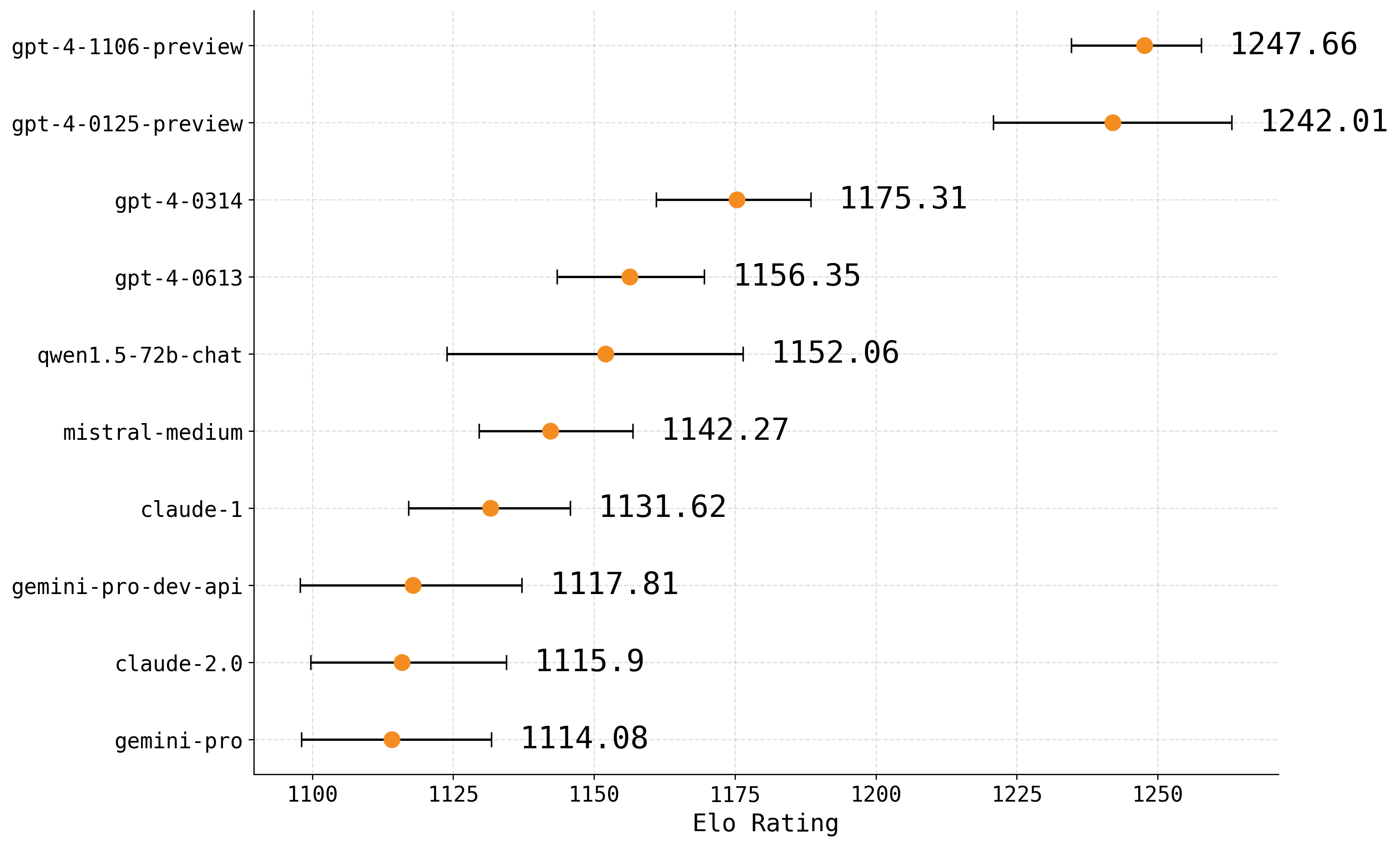}
        \caption{Original.}
    \end{subfigure}

    \vspace{0.5em} 

    \begin{subfigure}{0.9\linewidth}
        \centering
        \includegraphics[width=\linewidth]{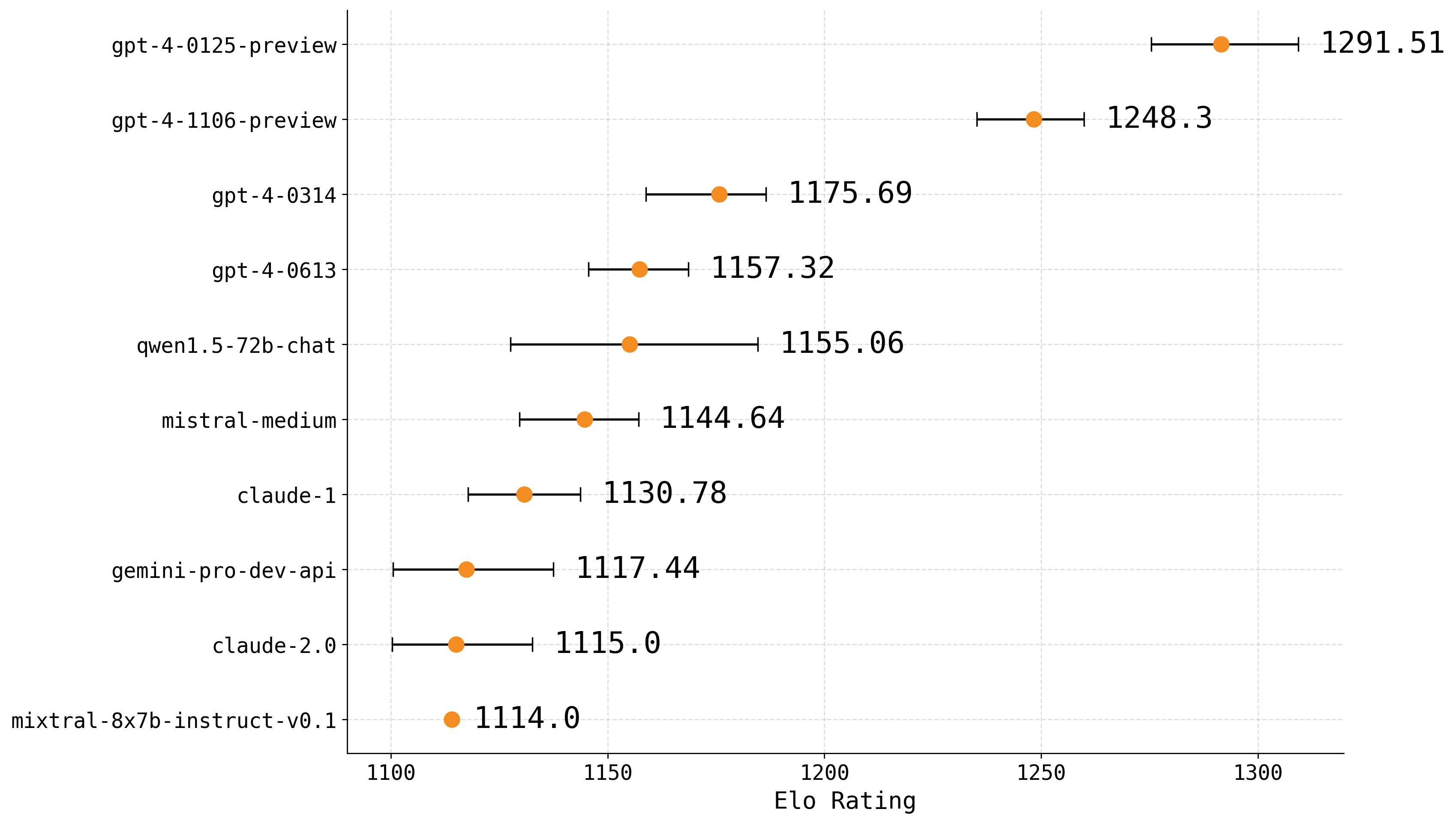}
        \caption{Post-data-dropping.}
    \end{subfigure}

    \caption{95\% Bootstrap-confidence-interval-based rankings on Chatbot Arena (LLM Judge).}
    \label{figure:confint-cba-llm}
\end{figure}

\begin{figure}[t]
    \centering

    \begin{subfigure}{0.9\linewidth}
        \centering
        \includegraphics[width=\linewidth]{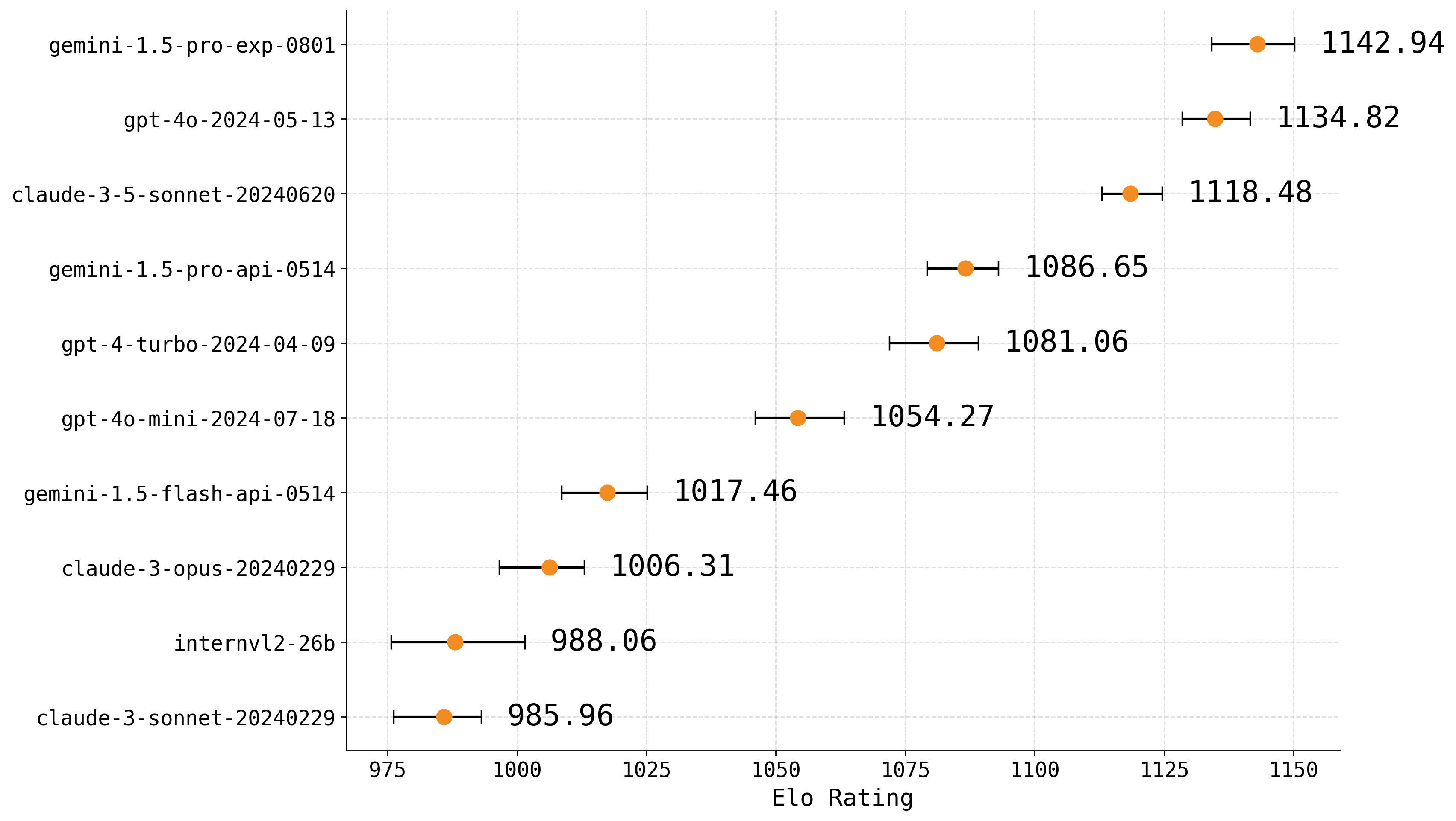}
        \caption{Original.}
    \end{subfigure}

    \vspace{0.5em} 

    \begin{subfigure}{0.9\linewidth}
        \centering
        \includegraphics[width=\linewidth]{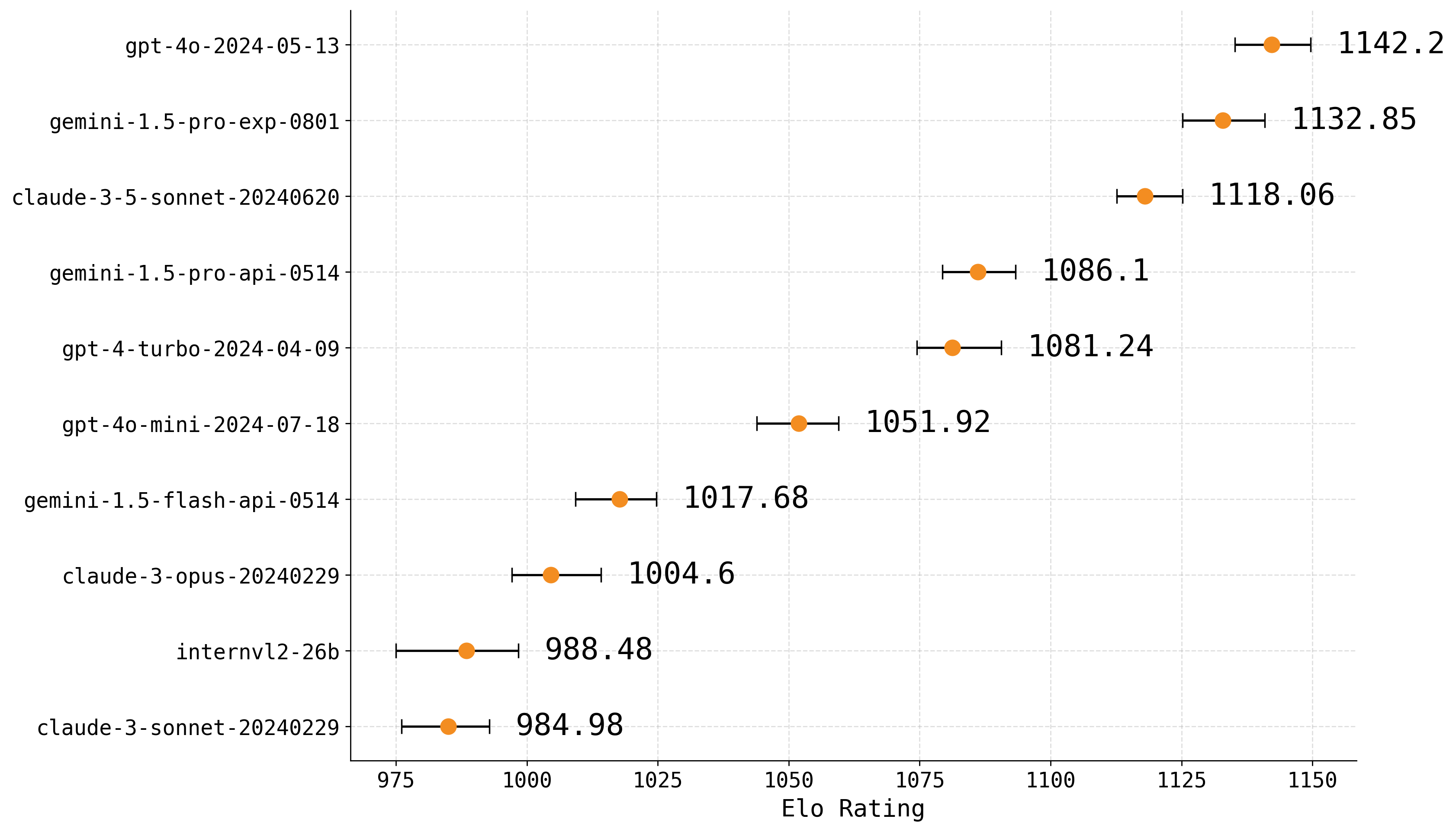}
        \caption{Post-data-dropping.}
    \end{subfigure}

    \caption{95\% Bootstrap-confidence-interval-based rankings on Vision Arena.}
    \label{figure:confint-vision}
\end{figure}

\begin{figure}[t]
    \centering

    \begin{subfigure}{0.9\linewidth}
        \centering
        \includegraphics[width=\linewidth]{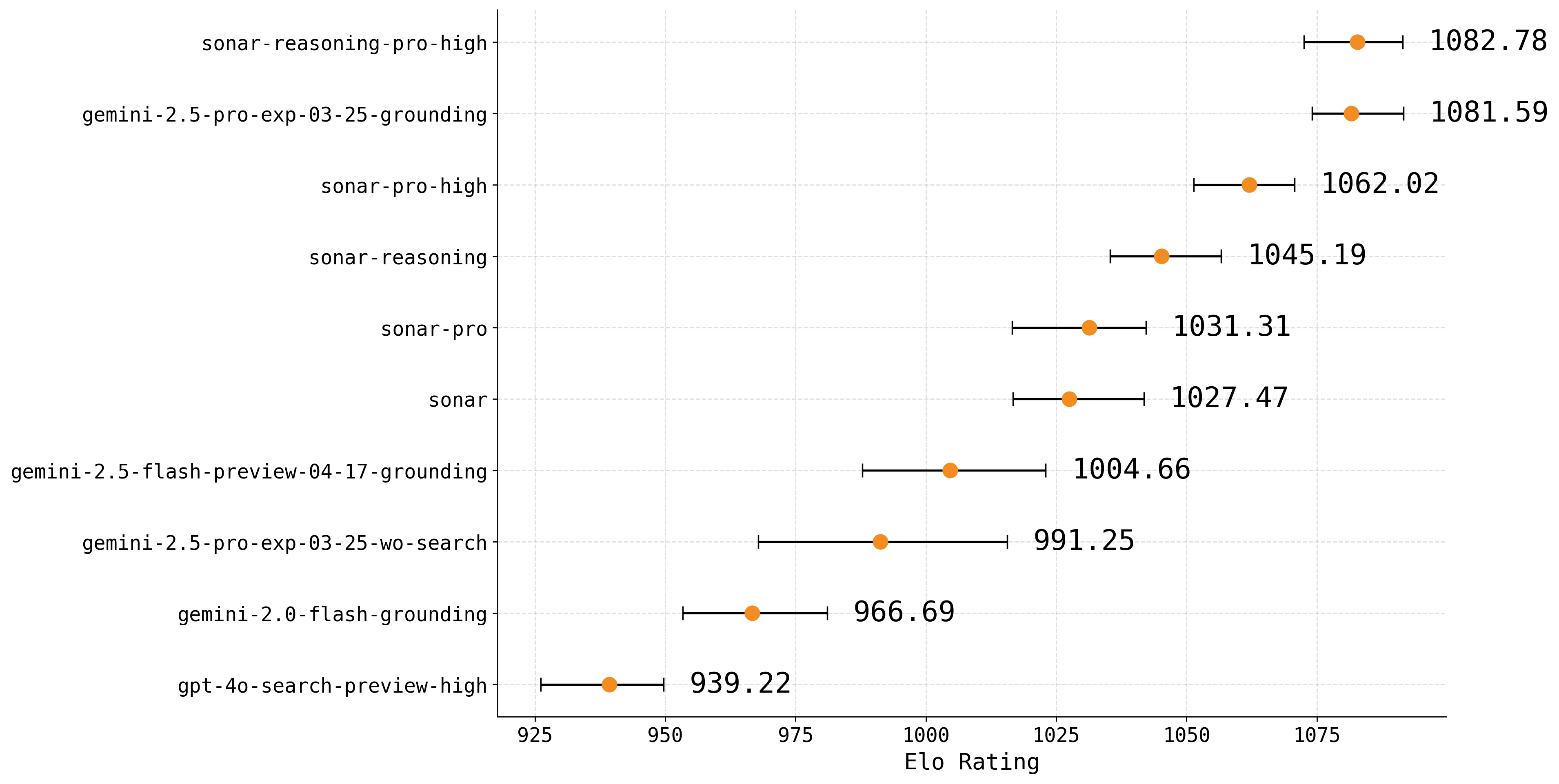}
        \caption{Original.}
    \end{subfigure}

    \vspace{0.5em} 

    \begin{subfigure}{0.9\linewidth}
        \centering
        \includegraphics[width=\linewidth]{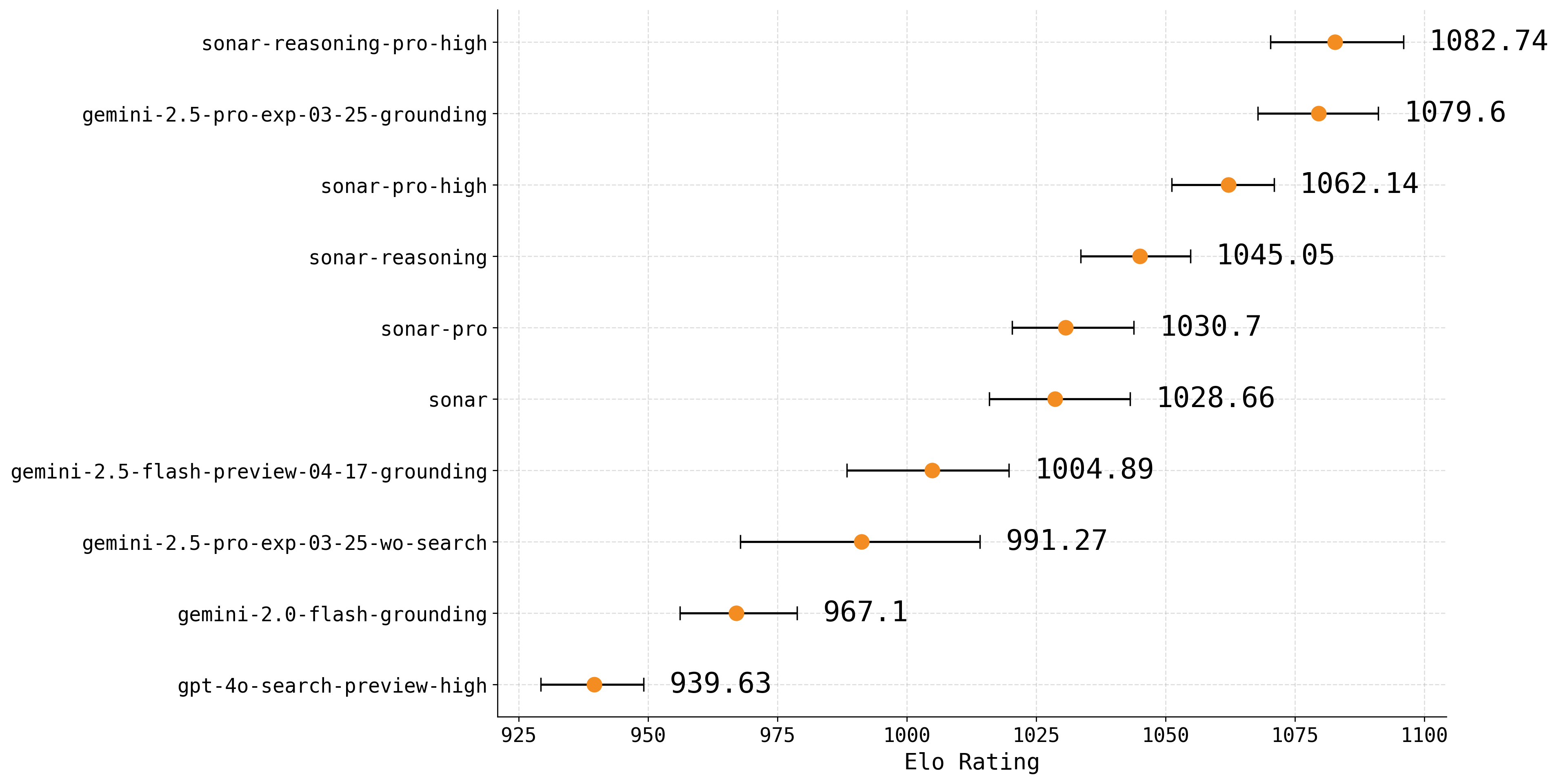}
        \caption{Post-data-dropping.}
    \end{subfigure}

    \caption{95\% Bootstrap-confidence-interval-based rankings on Search Arena.}
    \label{figure:confint-search}
\end{figure}

\begin{figure}[t]
    \centering

    \begin{subfigure}{0.9\linewidth}
        \centering
        \includegraphics[width=\linewidth]{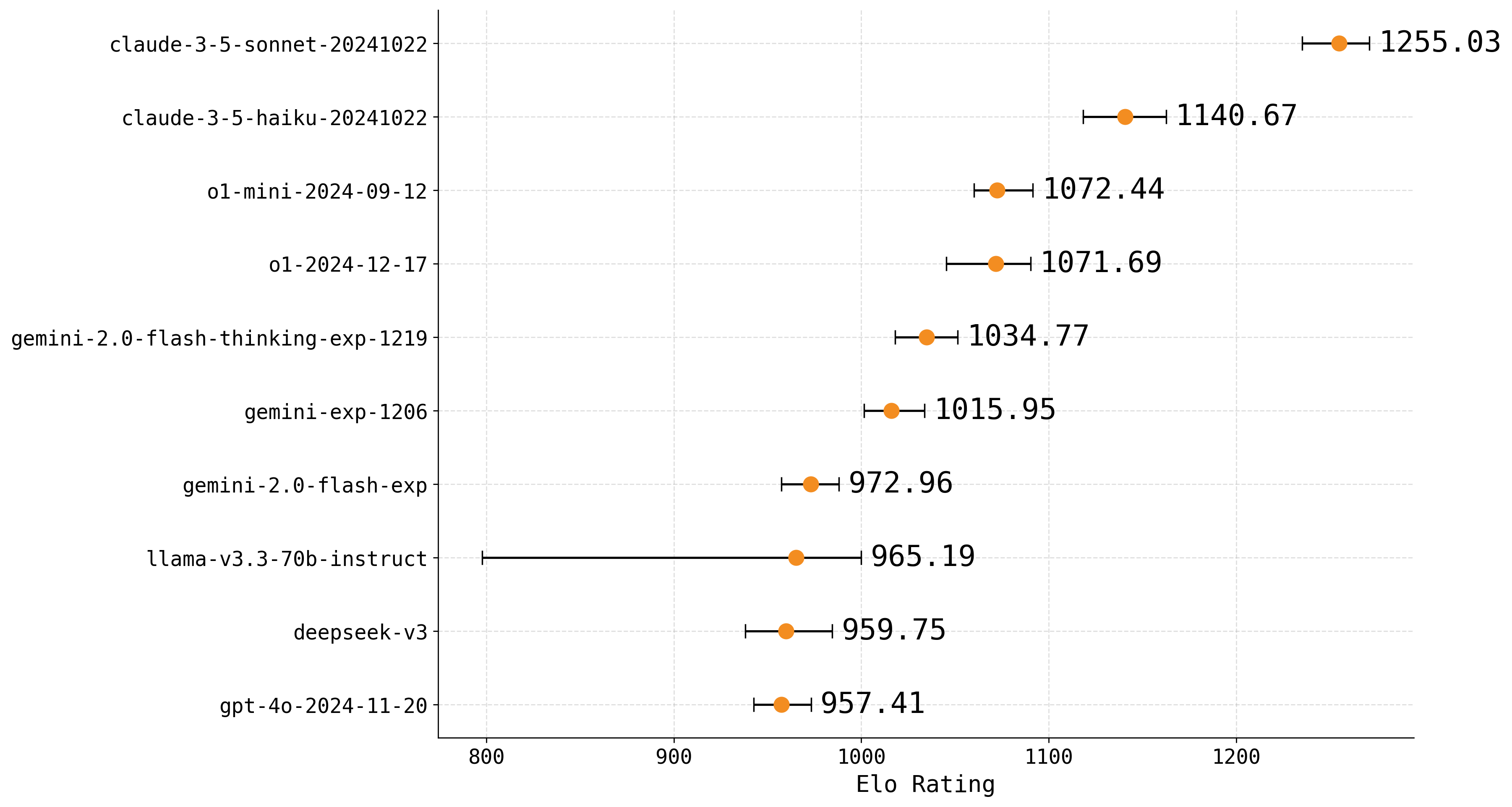}
        \caption{Original.}
    \end{subfigure}

    \vspace{0.5em} 

    \begin{subfigure}{0.9\linewidth}
        \centering
        \includegraphics[width=\linewidth]{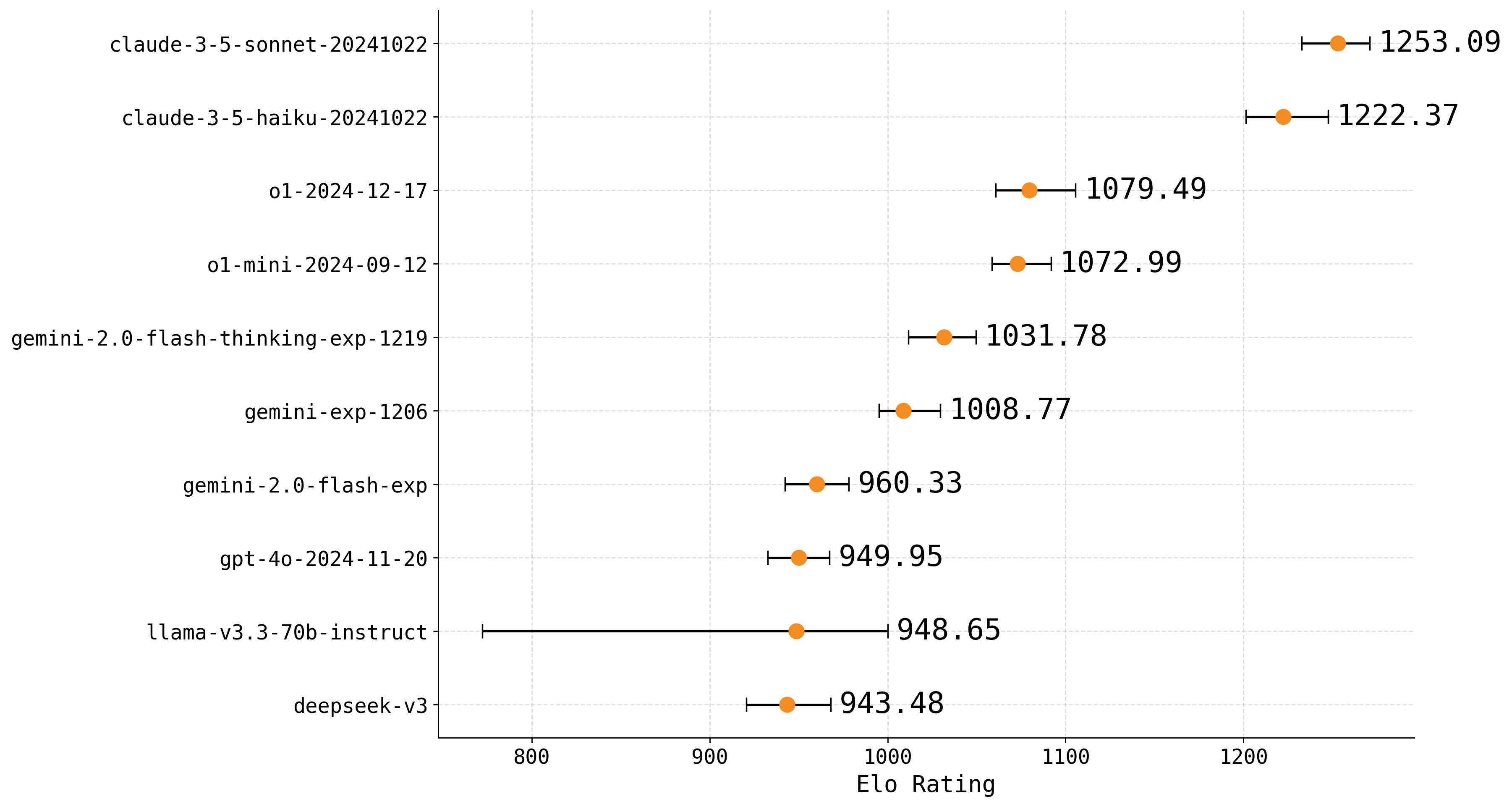}
        \caption{Post-data-dropping.}
    \end{subfigure}

    \caption{95\% Bootstrap-confidence-interval-based rankings on Webdev Arena.}
    \label{figure:confint-webdev}
\end{figure}

\begin{figure}[t]
    \centering

    \begin{subfigure}{0.9\linewidth}
        \centering
        \includegraphics[width=\linewidth]{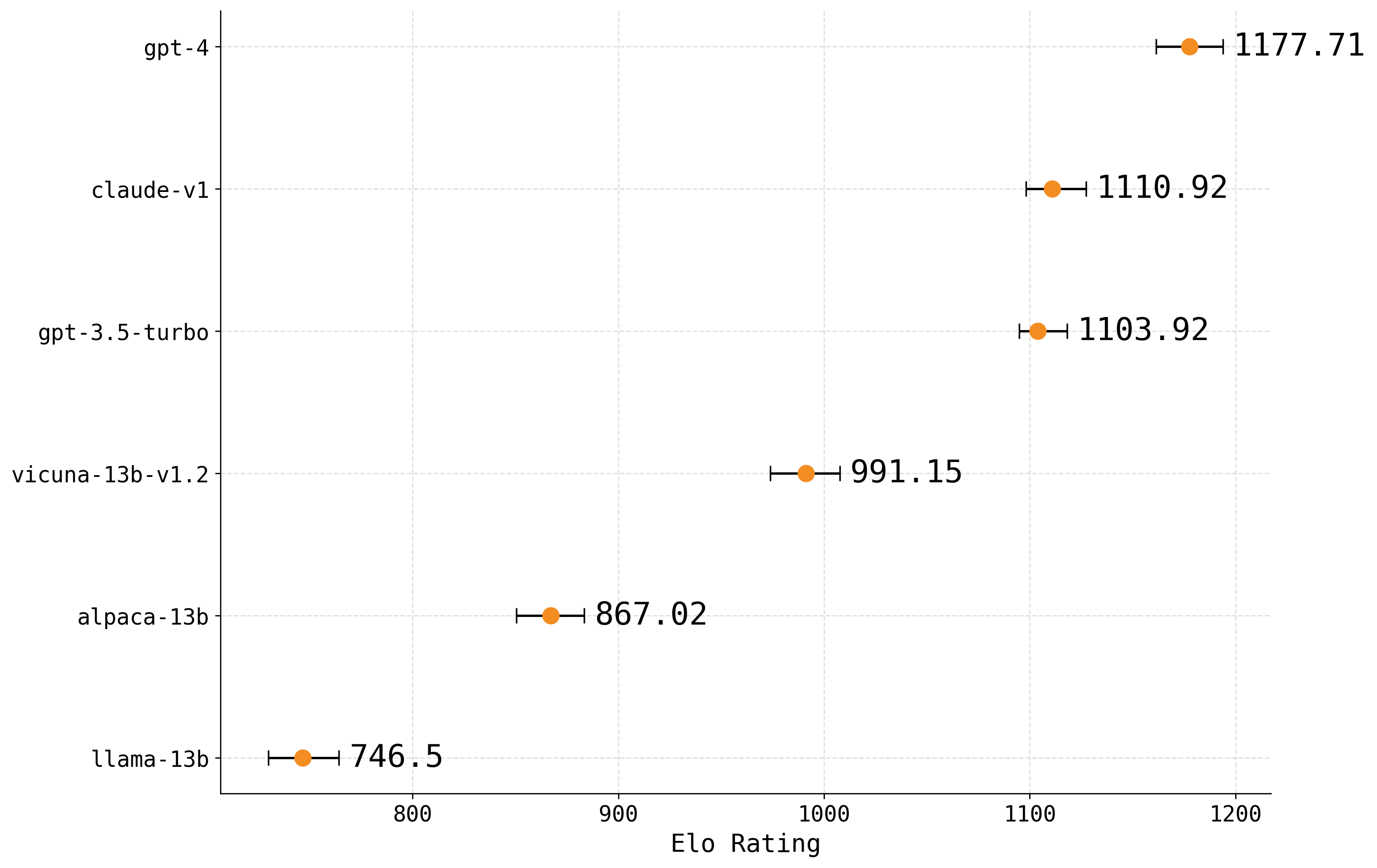}
        \caption{Original.}
    \end{subfigure}

    \vspace{0.5em} 

    \begin{subfigure}{0.9\linewidth}
        \centering
        \includegraphics[width=\linewidth]{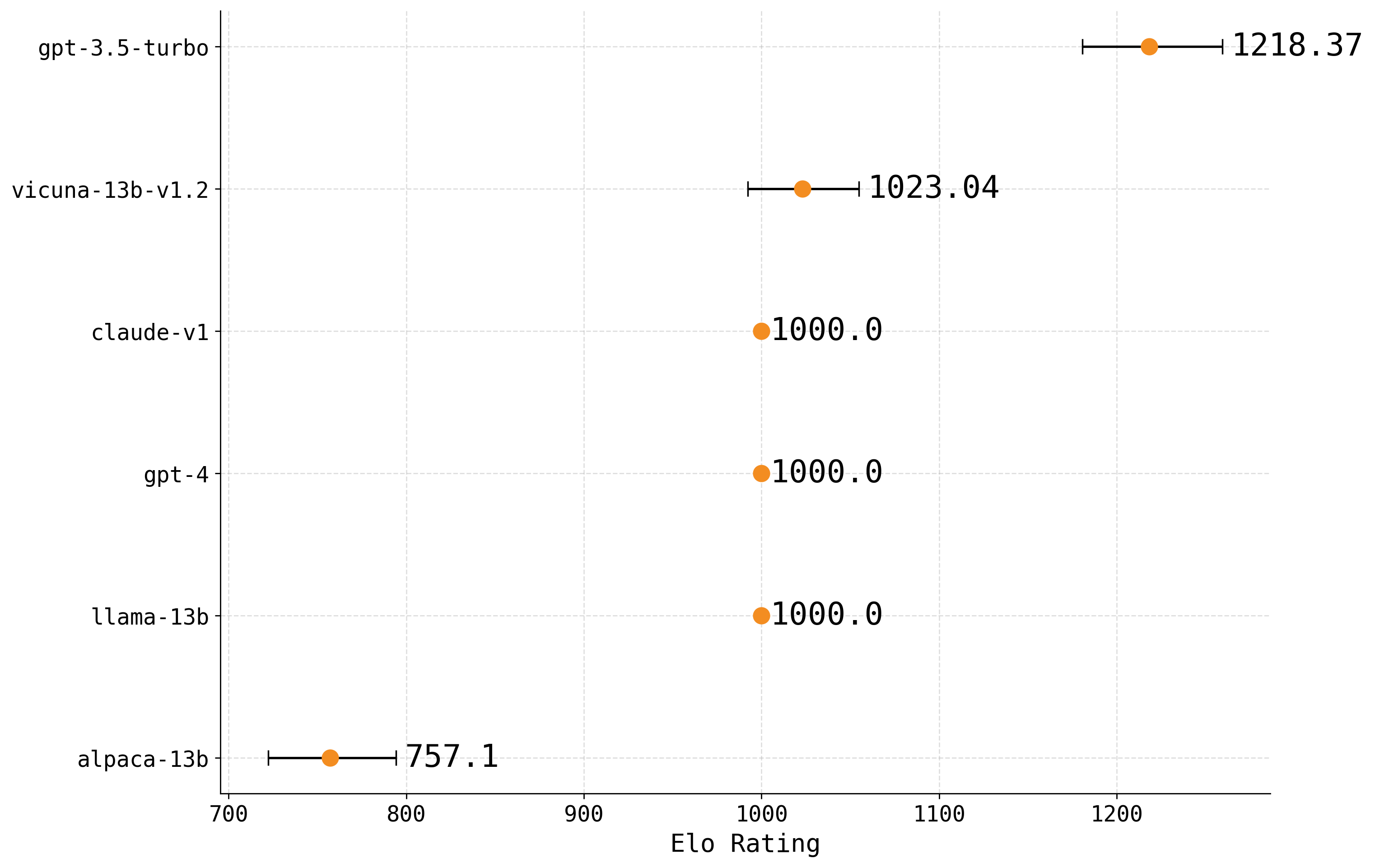}
        \caption{Post-data-dropping.}
    \end{subfigure}

    \caption{95\% Bootstrap-confidence-interval-based rankings on MTBench (Human Judge).}
    \label{figure:confint-MTBench-human}
\end{figure}

\begin{figure}[t]
    \centering

    \begin{subfigure}{0.9\linewidth}
        \centering
        \includegraphics[width=\linewidth]{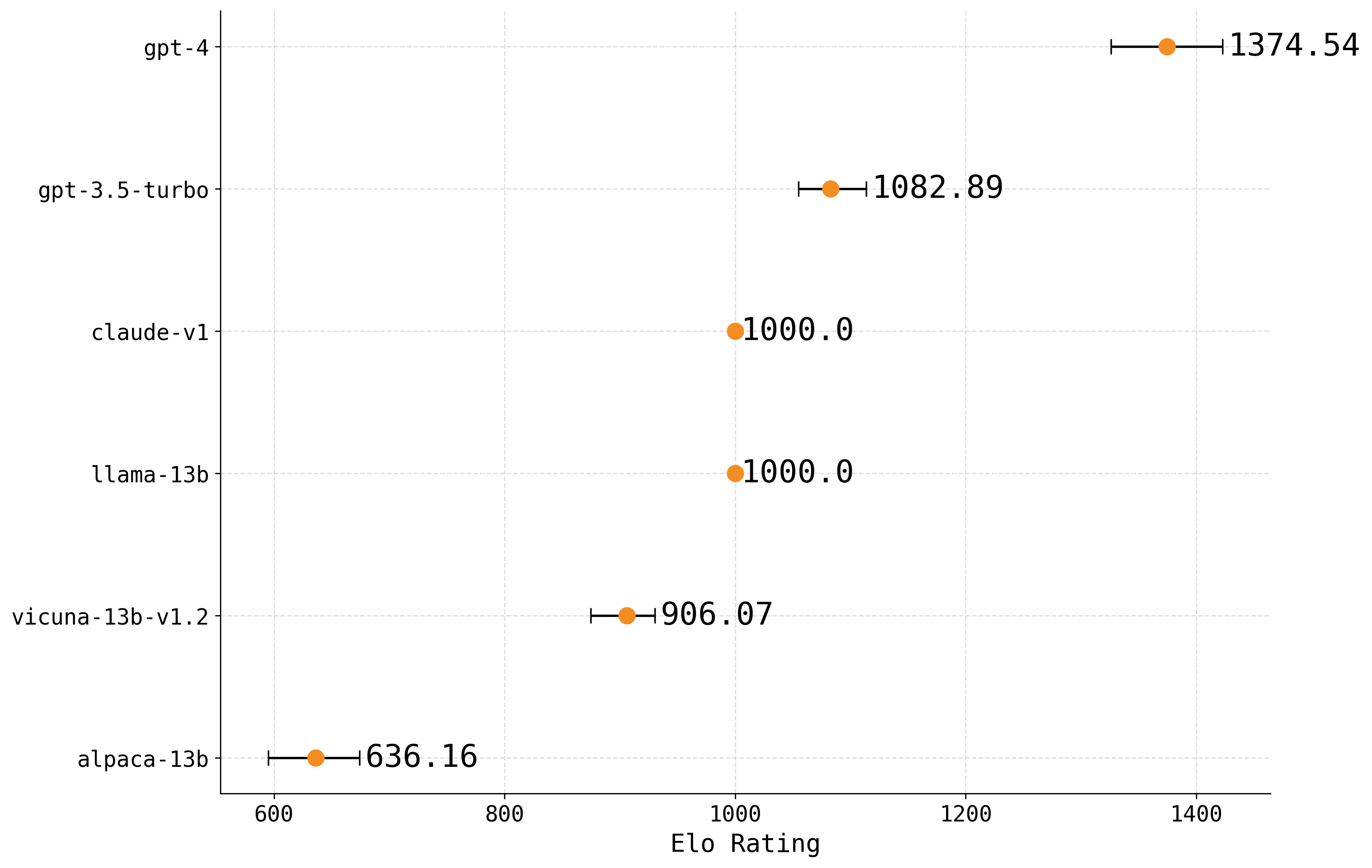}
        \caption{Original.}
    \end{subfigure}

    \vspace{0.5em} 

    \begin{subfigure}{0.9\linewidth}
        \centering
        \includegraphics[width=\linewidth]{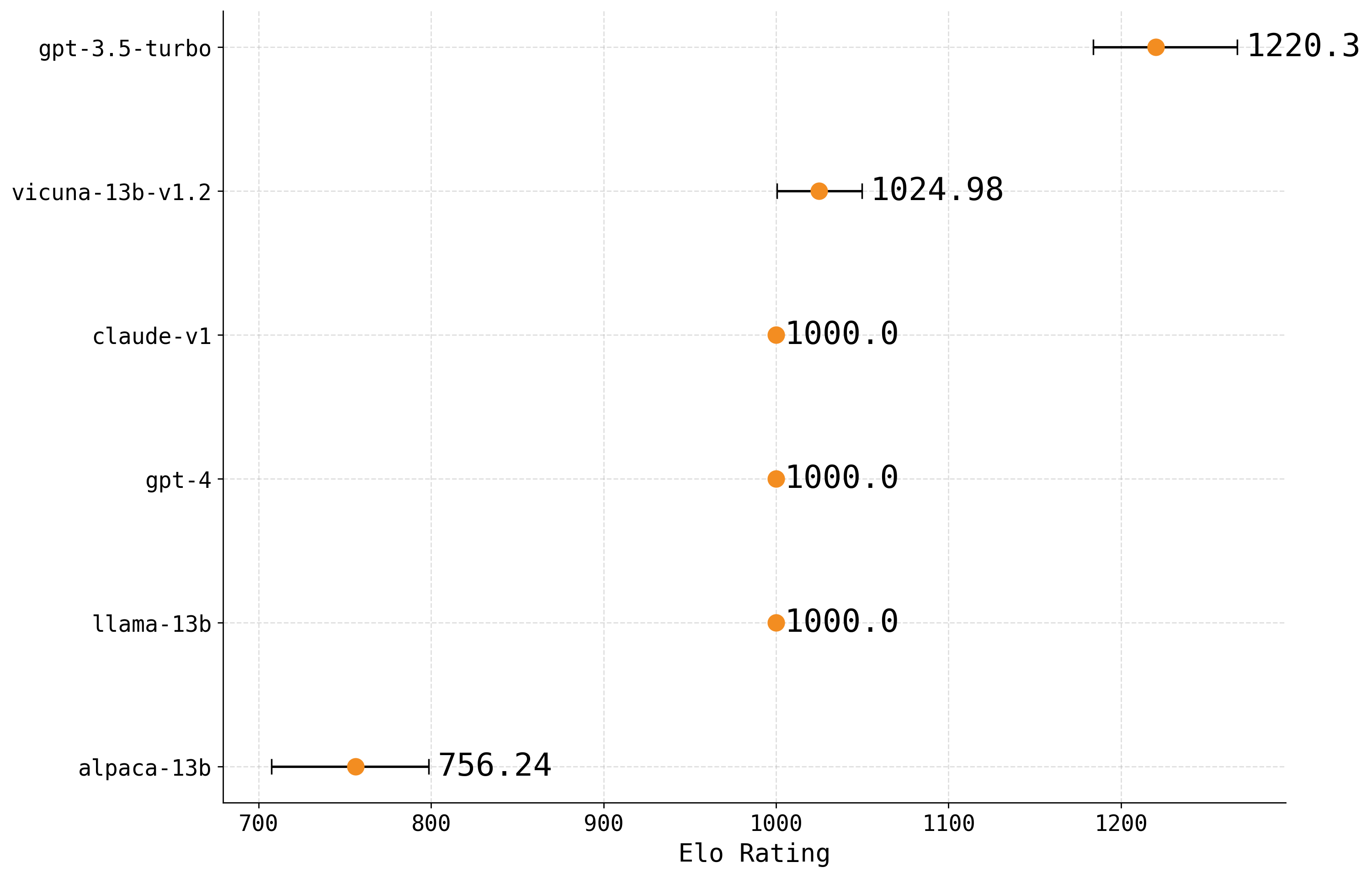}
        \caption{Post-data-dropping.}
    \end{subfigure}

    \caption{95\% Bootstrap-confidence-interval-based rankings on MTBench (LLM Judge).}
    \label{figure:confint-MTBench-llm}
\end{figure}

\begin{table*}[ht]
\centering
\rev{
\begin{tabularx}{\linewidth}{>{\centering\arraybackslash}X c >{\centering\arraybackslash}X >{\arraybackslash}X}
\hline
\textbf{Arena} & \textbf{Evaluator (Judge)} & \textbf{Number Dropped} & \textbf{Percentage Dropped} \\
\hline
Chatbot Arena & Human & 29 out of 57477 & 0.0510\% \\
Search Arena & Human & 25 out of 24469 & 0.103\% \\
Chatbot Arena & LLM & 75 out of 49938 & 0.150\% \\
Vision Arena & Human & 125 out of 29845 & 0.419\% \\
Webdev Arena & Human & 160 out of 10501 & 1.52\% \\ 
MT-bench & Human & 92 out of 3355 & 2.74\% \\
MT-bench & LLM & 40 out of 2400 & 4.00\% \\
\hline
\end{tabularx}
}
\caption{\rev{Results of checking top-1 robustness of bootstrap-based rankings on each of the arenas, listed in ascending order of robustness (from the least to the most robust). The ``Number Dropped'' column reports the number of preferences (matches) that are sufficient to flip the first and second-place models (players). The ``Percentage Dropped'' column shows this number as a percentage of the number of total preferences in the full arena.}}
\label{table:confint-nonrobustness-table}
\end{table*}

\rev{To construct \cref{table:confint-nonrobustness-table}, we first compute the bootstrap-based rankings on the full dataset, apply our method to identify influential preferences, remove those preferences, and then recompute the bootstrap-based rankings. Along with \cref{table:confint-nonrobustness-table}, we display the plots of the bootstrap-based rankings for the full data and the rankings post-data-dropping in \cref{figure:confint-cba-human,figure:confint-cba-llm,figure:confint-vision,figure:confint-search,figure:confint-webdev,figure:confint-MTBench-human,figure:confint-MTBench-llm}.} 

\rev{Despite the bootstrap's attempt to account for sampling uncertainty, we continue to find many arenas to be surprisingly sensitive to worst-case small-fraction data-dropping: the set of models ranked top-1 still changes in many arenas after removing a very small fraction of the arena. Across these experiments, we observe several arenas in which a new model enters the top-1 set (\cref{figure:confint-cba-human,figure:confint-vision,figure:confint-search}) and one arena in which a model is removed from the top-1 set (\cref{figure:confint-cba-llm}), all from dropping less than $1\%$ of preferences on the arena. We also surface arenas where the bootstrap-based ranking outputs a single top-ranked model, but upon small-fraction data dropping, the model becomes no longer the sole top-ranked model (see \cref{figure:confint-webdev}).}

\rev{This result shows that AMIP-based non-robustness is not an artifact of ignoring statistical uncertainty captured by confidence intervals. Rather, even after incorporating bootstrap variability, the arenas continue to be AMIP sensitive.}

\subsection{\rev{Distinction Between Worst-case Data-dropping Sensitivity and Confidence Intervals}}\label{sec:wcdatadropping-vs-confints}
\rev{Confidence intervals, such as the bootstrap intervals reported on LMArena \citep{lmarena2025leaderboard}, do quantify a form of sensitivity of BT-estimated rankings to variability across samples. However, the sampling-based sensitivity that bootstrap confidence intervals capture is conceptually different from that captured by AMIP. Bootstrap intervals characterize how much an estimate (e.g., the BT score) varies when data are resampled uniformly at random. In contrast, AMIP measures the maximum change in a BT-score difference that can be induced by removing a worst-case small fraction of the data. While frequentist \citep{gao2023uncertainty,hunter2004mm} confidence interval methods are meant to capture randomness in the data-generating process, the AMIP targets sensitivity of a single, fixed dataset. This focus on a single sample differs in spirit from the variability across ``counterfactual worlds'' that uncertainty quantification methods are meant to measure. In this sense, the two approaches answer complementary questions about the stability of a sample-based conclusion: the confidence intervals measure sampling uncertainty, while worst-case data-dropping robustness examines whether the conclusion is driven by a very small fraction of the observations in the sample. Although Bayesian credible intervals \citep{leonard1977alternative} also operate under the case of a single, fixed dataset, past work has demonstrated that data analyses can be both statistically significant in the Bayesian sense (credible interval does not include zero) and still sensitive to worst-case data dropping (see Bayesian hierarchical model case study in Section 4.4 of \citep{broderick2023automatic}). So, analogous to the frequentist case, the AMIP again represents a different and complementary check.}

\rev{These tools also differ in the statistical assumptions under which they provide guarantees. Bootstrap-based confidence intervals rely on the data being i.i.d. draws from a target population. Real-world preference datasets often depart from this regime due to differences in annotators (e.g., the same, or similar types of, annotators may annotate several prompts on LMArena), resulting prompt-selection biases, and various other potential context-based factors. AMIP, by contrast, does not require an i.i.d.\ assumption and therefore remains valid in settings where classical resampling tools do not apply reliably. Prior work \citep{broderick2023automatic} has demonstrated that data analyses can be simultaneously statistically significant yet worst-case data-dropping non-robust. In this sense, AMIP provides a complementary and practically useful lens for assessing the generalizability of LLM leaderboard rankings.}

\subsection{\rev{Uniform Data-Dropping Experiment}}
\label{sec:uniform-data-dropping}
\rev{To examine the contrast between worst-case data-dropping and dropping random pairwise preferences, we conduct a uniform subsampling experiment. For each arena, we drop $1\%$ of the evaluations uniformly at random, repeat the experiment 100 times, and record the fraction of runs in which the top-ranked model remains unchanged relative to the full arena.
For Chatbot Arena (human-judge), we additionally report robustness at a finer scale of $\alpha=0.1\%$.}

\begin{table*}[ht]
\centering
\rev{
\begin{tabularx}{0.9\linewidth}{l>{\centering\arraybackslash}X}
\hline
\textbf{Arena} & \textbf{Fraction of Trials Top-1 Robust} \\
\hline
Chatbot Arena (Human-judge) & 0.77 (0.97 at $\alpha = 0.1\%$) \\
Vision Arena & 1.00 \\
NBA Games & 1.00 \\
Chatbot Arena (LLM-judge) & 1.00 \\
Webdev Arena & 1.00 \\
Search Arena & 1.00 \\
MT-bench (LLM-judge) & 1.00 \\
ATP Tennis & 1.00 \\
MT-bench (Human-judge) & 1.00 \\
\hline
\end{tabularx}
}
\caption{\rev{Top-1 robustness of each arena under uniform-at-random data-dropping. Each entry reports the proportion of 100 trials in which dropping $1\%$ of the evaluations uniformly-at-random does not change the top-ranked model.}}
\label{table:uniform-dropping}
\end{table*}

\rev{The results in Table \ref{table:uniform-dropping} highlight a key conceptual distinction between uniform and worst-case data-dropping. Across nearly all arenas, dropping $1\%$ of the evaluations uniformly at random leaves the top-ranked model unchanged in every trial. Even Chatbot Arena (human-judge), which is the least stable under uniform subsampling, maintains its top-ranked model in $77\%$ of random $1\%$ deletions, a fraction that is many magnitudes larger than the $0.00348\%$ of preferences required to flip the top-ranked model when dropping the worst-case data subset. These results show that the rankings are extremely sensitive to dropping a worst-case small fraction of preferences, yet stable (at $\alpha=1\%$) to dropping preferences chosen at random. Taken together, these observations show that uniform and worst-case data-dropping probe fundamentally distinct failure modes.}

\section{\rev{Top-k Sets Can Be Characterized By Sets of Pairwise Player Comparisons}}
\label{sec:topk-equivalence-proof}

\rev{We show in \cref{prop:topk} that the top-$\topk$ set can be characterized by a set of pairwise player comparisons.}

\begin{proposition}
\label{prop:topk}
    Suppose we have $\totteams$ real numbers, $\allteams(\wvec):=\{\team_{\teamidx}(\wvec)\}_{\teamidx=1}^\totteams$. 
    Suppose a set $\alttopkteams \subset \allteams(\wvec)$ satisfies $|\alttopkteams|=\topk$. Suppose it is the case that $\forall \; \team_{\teamidx}(\wvec) \in \alttopkteams$ and $\forall \; \team_{\altteamidx}(\wvec) \in \allteams(\wvec)\setminus\alttopkteams$, we have that $\team_{\teamidx}(\wvec) >\team_{\altteamidx}(\wvec)$. Then, it must be that $\alttopkteams$ is the top-$\topk$ set, \rev{i.e.,} $\alttopkteams=\topkteamsgeneric$.
\end{proposition}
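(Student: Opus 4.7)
The statement is essentially a direct characterization of the top-$k$ set: if every element in a size-$k$ subset strictly dominates every element outside that subset, the subset must be exactly the top-$k$. The plan is to proceed by contradiction, reducing the claim to an incompatibility between the hypothesis and the definition of $\rank$.

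Concretely, I would assume for contradiction that $\alttopkteams \neq \topkteamsgeneric$. Since $|\alttopkteams| = |\topkteamsgeneric| = k$ and both are subsets of the same $M$-element set $\allteams(\wvec)$, a counting (pigeonhole) argument produces two witnesses: an element $\team_{\altteamidx}(\wvec) \in \alttopkteams \setminus \topkteamsgeneric$ and an element $\team_{\teamidx}(\wvec) \in \topkteamsgeneric \setminus \alttopkteams$. Observe that the latter element then lies in $\allteams(\wvec) \setminus \alttopkteams$, so the hypothesis of the proposition applies to the pair and yields the strict inequality $\team_{\teamidx}(\wvec) > \team_{\altteamidx}(\wvec)$ (after relabeling $i,j$ appropriately — here I am using the hypothesis with the in-$\alttopkteams$ element $\team_{\altteamidx}(\wvec)$ on the large side and the out-of-$\alttopkteams$ element $\team_{\teamidx}(\wvec)$ on the small side, giving $\team_{\altteamidx}(\wvec) > \team_{\teamidx}(\wvec)$). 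Meanwhile, the definition of $\topkteamsgeneric$ via the $\rank$ function forces $\team_{\teamidx}(\wvec) \geq \team_{\altteamidx}(\wvec)$, because $\team_{\teamidx}(\wvec)$ sits in the top $k$ by rank while $\team_{\altteamidx}(\wvec)$ does not. Combining the two inequalities produces the desired contradiction.

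The one subtle point, and the only real obstacle, is the handling of ties. The $\rank$ function, as used in \cref{eqn:top-k-definition}, implicitly relies on some tiebreaking rule when several scores coincide, so one might worry that $\topkteamsgeneric$ is ambiguous or that $\team_{\teamidx}(\wvec) = \team_{\altteamidx}(\wvec)$ could occur at the boundary. The hypothesis sidesteps this entirely: it asserts \emph{strict} dominance of every element of $\alttopkteams$ over every element of $\allteams(\wvec) \setminus \alttopkteams$, which is inconsistent with any tie between an in-$\alttopkteams$ and out-of-$\alttopkteams$ score, and so is inconsistent with $\alttopkteams$ differing from any valid realization of $\topkteamsgeneric$. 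A clean alternative is to argue constructively: sort the scores in descending order; the strict-dominance hypothesis forces every entry of $\alttopkteams$ to appear before every entry of its complement, and since $|\alttopkteams| = k$, $\alttopkteams$ must consist of exactly the first $k$ entries of the sorted list, i.e., $\alttopkteams = \topkteamsgeneric$.
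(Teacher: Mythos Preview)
Your proposal is correct and takes essentially the same elementary route as the paper: the paper argues by double inclusion (showing $\alttopkteams \subset \topkteamsgeneric$ directly via a rank count, then $\topkteamsgeneric \subset \alttopkteams$ by contradiction), whereas you package both directions into a single contradiction by pigeonholing a witness pair from $\alttopkteams \setminus \topkteamsgeneric$ and $\topkteamsgeneric \setminus \alttopkteams$, but the underlying inequality-versus-rank tension is identical. Your handling of ties and the alternative sorting argument are fine and not needed in the paper's version, which sidesteps the issue by counting the $\totteams-\topk$ strictly smaller elements directly.
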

\begin{proof}
    We first show that $\alttopkteams\subset \topkteamsgeneric$. Suppose that $\team_{\teamidx}(\wvec) \in \alttopkteams$. By assumption, we have that $\forall \; \team_{\altteamidx}(\wvec) \in \allteams(\wvec)\setminus\alttopkteams$,  $\team_{\teamidx}(\wvec) >\team_{\altteamidx}(\wvec)$. Since $|\allteams(\wvec)\setminus\alttopkteams|=\totteams-\topk$, there must exist at least $(\totteams-\topk)$ values in  $\allteams(\wvec)$ that are smaller than $\team_i(\wvec)$. This must mean that $\rank(\team_{\teamidx}(\wvec); \allteams(\wvec))\le \topk$, so $\team_{\teamidx}(\wvec)\in \topkteamsgeneric$ as needed.

    We next show that $\topkteamsgeneric \subset \alttopkteams$ by contradiction. Suppose there exists a $\team_{\altteamidx}(\wvec)$ such that $\team_{\altteamidx}(\wvec)\in \topkteamsgeneric$ but $\team_{\altteamidx}(\wvec)\notin \alttopkteams$. Since $\team_{\altteamidx}(\wvec)\notin \alttopkteams$, then $\team_{\altteamidx}(\wvec)\in \allteams(\wvec)\setminus\alttopkteams$. This means that $\forall \team_{\teamidx}(\wvec) \in \alttopkteams$ we have $\team_{\teamidx}(\wvec)>\team_{\altteamidx}(\wvec)$, and since $|\alttopkteams|=\topk$, this implies that $\rank(\team_{\altteamidx}(\wvec); \allteams(\wvec))>\topk$, contradicting the assumption $\team_{\altteamidx}(\wvec)\in \topkteamsgeneric$. 
\end{proof}

\section{AMIP approximation for BT Models}
\label{sec:BTlogistic}

\subsection{AMIP approximation of general weighted BT models}
For completeness we provide here a review on general AMIP approximation proposed by \citet{broderick2023automatic} to solve the optimization problem \cref{eqn:comb-optimization}.

\citet{broderick2023automatic} propose relaxing $\wvec$ to allow continuous values and replacing the $\wvec$-specific quantity of interest with a first-order Taylor series expansion with respect to $\wvec$ around $\onevec$. This first-order Taylor series expansion is known as the \textit{influence function (IF)} approximation \citep{hampel2011robust}, a classic technique from robust statistics that approximates the affect of upweighting (or dropping) a data point on model parameters using a first-order Taylor series approximation in data-weight space. Influence functions have become popular tools for approximating resampling methods \citep{giordano2019higher} and assigning value to data that a model was trained on \citep{koh2017understanding,park2023trak}. This approximation applies to more general data analyses and quantities of interest.

In our case, this approximation amounts to replacing \cref{eqn:comb-optimization} with
\begin{align}
    \max_{\wvec \in W_{\alpha}} \sum\nolimits_{n=1}^N (1 - w_n) \left( \frac{\partial \hat{\theta}_i(\wvec)}{\partial w_n}\Big\vert_{\wvec = 1_N}- \frac{\partial \hat{\theta}_j(\wvec)}{\partial w_n}\Big\vert_{\wvec = 1_N} \right).
    \label{eq:linear-optimization}
\end{align}
Let 
\begin{equation}
    \begin{aligned}
        L(y_{n}, \theta) \coloneqq  w_{\win\loss}I_{y_{n}=\win} \log \sigma(\theta_{i_n}-\theta_{j_n})  + w_{\win\loss}I_{y_{n}=\loss} \log (1 - \sigma(\theta_{i_n}-\theta_{j_n})) \qquad\qquad\quad 
        \\+ w_{\tie}I_{y_{n}=\tie}\big(\log \sigma(\theta_{i_n}-\theta_{j_n})+\log (1 - \sigma(\theta_{i_n}-\theta_{j_n}))\big).
    \end{aligned}
    \label{eq:weighted-likelihood}
\end{equation}
to be the likelihood for a single data point. The impact of upweighting $\wvec$ on the parameter $\hat{\theta}_i(\wvec)$ is then given by
\begin{equation}
    \frac{\partial \hat{\theta}_i(\wvec)}{\partial w_n}\Big\vert_{\wvec = 1_N}
    \;=\; - H_{\hat{\theta}(1_N)}^{-1} \, \nabla_{\theta} L(y_n, \theta)\Big\vert_{\theta = \hat{\theta}(1_N)},
    \label{eq:derivative-expanded}
\end{equation}
where
\begin{equation}
\begin{aligned}
    H_{\hat{\theta}(1_N)} \;\coloneqq\; \frac{1}{N} \sum_{n=1}^{N} 
    \nabla^2_{\theta} L(y_n, \theta)\Big\vert_{\theta =\hat{\theta}(1_N)}.
\end{aligned}
\label{eq:hession}
\end{equation}
See \citet[Section 2.2.2]{broderick2023automatic} for more details on this derivation. In what follows we provide details on how to apply this approximation in BT models by reformulating it as a logistic regression.

\subsection{BT models as logistic regressions}
\textbf{Unweighted BT.} In the unweighted BT model with $w_{\win, \loss}=1, w_{\tie}=0$, with an abuse of data indices $n$, the preferences are assumed to be generated as 
\begin{equation}
    y_{n}\sim \text{Bernoulli}(\sigma(\theta_{i_n}-\theta_{j_n})),
\end{equation}
We can cast this model as a logistic regression with a specially-structured design matrix. We denote the corresponding ``design'' vector of the $n$th comparison, $x_{n}\in \{-1,0,1\}^{M}$, a vector encoding which two \revision{players} are being compared. That is, if the game is between players $i$ and $j$, then $x_n$ has a $1$ in the $i$th element, a $-1$ in the $j$th element, and $0$ otherwise. Using this structure, we can rewrite the model as a logistic regression model with $M-1$ parameters corresponding to the scores of the players, $\btheta=(\theta_1,  \dots, \theta_{M})\in \mathbb{R}^{M}$ with $\theta_1=0$,
\begin{equation}
    y_{n}\sim \text{Bernoulli}(\sigma(x_{n}^\top\btheta )).
\end{equation}
We fit the BT-model (i.e., estimate $\btheta$) by maximum likelihood of logistic regression,
\begin{equation}
\begin{aligned}
\hat{\btheta} := 
&\argmax_{\btheta:\theta_1=0} \;
\sum_{n=1}^N \big( 
    y_n \log \sigma(x_n^\top \btheta) \\
    &\qquad\qquad\quad + (1 - y_n) \log (1 - \sigma(x_n^\top \btheta))
\big).
\end{aligned}
\end{equation}

\textbf{Weighted BT.} The model actually used in e.g., ChatBot Arena that handles tie by 1) counting every winning/loss as two games with the same outcome and 2) couting tie as two games with opposite outcomes. This effectively sets $w_{\win, \loss}=2, w_{\tie}=1$. This special case can also be casted as a logistic regression with two copy of the design matrix same as unweighted version, $\bm{X}_{weighted}=[\bm{X}, \bm{X}]$. That is, suppose there are in total $N$ games, if the $n$th game is between players $i$ and $j$, then $x_{weighted, n}$ as well as $x_{weighted, n+N}$ has a $1$ in the $i$th element, a $-1$ in the $j$th element, and $0$ otherwise. The response $y_{weighted, n}=I_{y_{n}=\win}$ and $y_{weighted, n+N}=I_{y_n=\win}+I_{y_n=\tie}$. I.e., in the first copy of the game, a tie is counted as a loss and in the second copy of the game, a tile is counted as a win while winning and losing are counted twice in total from both copies. Then we can fit the weighted BT by maximum likelihood of logistic regression,
\begin{equation}
\begin{aligned}
\hat{\btheta} := 
&\argmax_{\btheta:\theta_1=0} \;
\sum_{n=1}^{2N} \big( 
    y_{weighted, n} \log \sigma(x_{weighted, n}^\top \btheta) \\
    &\qquad\qquad\quad + (1 - y_{weighted, n}) \log (1 - \sigma(x_{weighted, n}^\top \btheta))
\big).
\end{aligned}
\end{equation}

\subsection{Applying AMIP to BT models in logistic form}
\label{sec:amip-approximation}
In this section we provide details on applying general \cref{eq:derivative-expanded} in our specific case of logistic regression formed BT models. We observed that our quantity of interest $\theta_i-\theta_j$ is a linear combination of effect size $\theta_i$s in logistic regression, thus the first order Taylor expansion of this quantity can be calculated by first order Taylor expansion of $\theta_i$s. 

Let $e_j$ denote the $j$th standard basis vector and $\designmatrix \in \mathbb{R}^{N \times P}$ denote the design matrix. Let \(\widehat{p}_n = \sigma( \hat{\theta}^\top x_n)\) and $\bV = \operatorname{diag}(\{\widehat{p}_n (1 - \widehat{p}_n)\}_n)$. For logistic regression with an effect-size quantity of interest, $\theta_j$, the formula for the influence score for the $n$th data point \citep{pregibon1981logdiagnostics} is given by
\begin{align}
    \frac{\partial \hat{\theta}_j(\wvec)}{\partial w_n}\Big\vert_{\wvec = 1_N} = 
    e_j^\top(\designmatrix^\top \bV \designmatrix)^{-1}x_n\widehat{p}_n (1 - \widehat{p}_n)\left(y_n - \widehat{p}_n\right),
    \label{eqn:logreg-influence-function-app}
\end{align}
In addition to influence functions, our framework enables a second data-dropping approximation known as the \textit{One-step Newton (1sN)} approximation, which approximates the effect of dropping a data point on model parameters using a second-order Taylor expansion in parameter space. This Newton-style update has become popular for approximating the deletion of data in recent works on approximate cross validation \citep{ghosh2020approximate,wilson2020approximate} and machine unlearning \citep{sekhari2021remember,suriyakumar2022algorithms}. The 1sN is slightly more expensive to compute than the IF approximation (as it corrects the IF with a multiplicative correction term) but is more accurate when the to-be-dropped data point has high a leverage score (because the correction term involves the leverage score of a data point). Previous works have proposed approximating the removal of a group of data points by the sum of leave-one-out 1sN scores, in an algorithm known as the \textbf{Additive one-step Newton approximation} \citep{huang2024approximations,park2023trak}.

\revision{To run the AMIP and Additive one-step Newton algorithm to check pairwise robustness between two given players, $i$ and $j$, we: }
\begin{enumerate}
    \item Fit a BT model on the entire arena.
    \item Compute the \emph{influence scores} (\cref{eqn:logreg-influence-function-app}) (one-step Newton scores for the Additive one-step Newton algorithm) for all matches in the arena.
    \item Identify the $\floor{\alpha N}$ matchups for which the difference in influence scores is the largest in the negative direction (assuming that player $i$ has a higher estimated BT score than player $j$ on the full data).
    \item \revision{Approximate impact of dropping these $\floor{\alpha N}$ matchups by the sum of the influence score approximations.}
    \item If the approximation predicts that the relative ranking between players $i$ and $j$ changed, then refit the model leaving out the identified subgroup.\footnote{Our algorithm gives users the option to refit the BT model for all matchups, regardless of whether a predicted ranking change occurs.}
\end{enumerate}
These data-dropping algorithms replace a computationally intractable combinatorial search with an algorithm that costs only \[O(Analysis + N \log(\alpha N) + NP^2 + P^3),\] where $Analysis$ represents the cost of fitting the initial Bradley--Terry model on the original arena to compute scores. Data-dropping approximations make identifying candidate subsets of the arena that may induce top-$\topk$ non-robustness very fast because they eliminate the need to retrain the BT model repeatedly on every candidate subset. Once a candidate subset is identified, however, our method always performs a \textit{refitting} of the BT model with the identified subset removed to verify whether the non-robustness is true. This final verification step ensures that our method does not return false positives.

\section{Arenas}
\label{sec:arenas}
\textbf{Chatbot Arena.} A crowdsourced platform where users engage in conversations with two chatbots at the same time and rate their responses based on personal preferences \citep{zheng2023judging}. We use the \texttt{arena-human-preference-55k} amd \texttt{chatbot-arena-llm-judges} datasets. This benchmark contains a total number of \(57{,}477\) preferences. \rev{Figure~\ref{figure:confint-cba-human}} presents the BT scores of the top models in Chatbot Arena.

\textbf{MT-Bench.} A multi-turn question set designed to compare LLMs in multi-turn conversation and instruction following constructed to distinguish between models based on reasoning and mathematics \citep{zheng2023judging}. We use the \texttt{mt-bench-human-judgments} dataset. This benchmark was handcrafted using $58$ expert-level human labelers; it contains $3{,}355$ total preferences. In contrast to Chatbot Arena, labelers are mostly graduate students, so they are considered more skilled than average crowd workers. \rev{Figure~\ref{figure:confint-MTBench-human}} presents the BT scores of the models in MT-bench.

\textbf{Search Arena.} A crowdsourced platform for search-augmented LLMs, focusing on real-world and current events rather than static factual questions. We conduct our analysis using historical data available on Hugging Face: \texttt{lmarena-ai/search-arena-24k}. The dataset contains $24{,}069$ multi-turn conversations with search-LLMs across diverse intents, languages, and topics. 
\rev{Figure~\ref{figure:confint-search}} presents the BT scores of the top models in Search Arena.

\textbf{Webdev Arena.} A crowdsourced platform for LLM web development tasks, such as building interactive applications and webpages. We conduct our analysis using historical data available on Hugging Face: \texttt{lmarena-ai/webdev-arena-preference-10k}. This dataset contains $10{,}000$ user-submitted prompts.  
\rev{Figure~\ref{figure:confint-webdev}} presents the BT scores of the top models in Webdev Arena.

\textbf{Vision Arena.} A crowdsourced platform that tests vision-language models on visual question-answering. There are a total of $30{,}000$ single and multi-turn chats between users and two anonymous vision-language models. We conduct our analysis using historical data available on Hugging Face: \texttt{lmarena-ai/VisionArena-Battle}.  
\rev{Figure~\ref{figure:confint-vision}} presents the BT scores of the top models in Vision Arena.

\textbf{ATP Tennis.} Association of Tennis Professionals (ATP) tennis records consolidated by \citet{Sackmann2024TennisATP}. Each entry represents a match from the ATP tour, a worldwide top-tier men's tennis tour, and consists of the identifiers of the winning and losing players and the match-related metadata (e.g., player rankings, name of the tournament). We focused on the top-10 ranked players based on the 2024 season ranking and analyzed their plays throughout four seasons, 2020-2024. To avoid the case where dropping a small proportion of matches could drop a player's entire record, we focus our analysis on players who played at least 20 games. There were in total 278 games after filtering. Figure~\ref{fig:top_tennis} presents the BT scores of the top models in the tennis dataset.
\begin{figure}[t]
    \centering
    \includegraphics[width=0.65\linewidth]{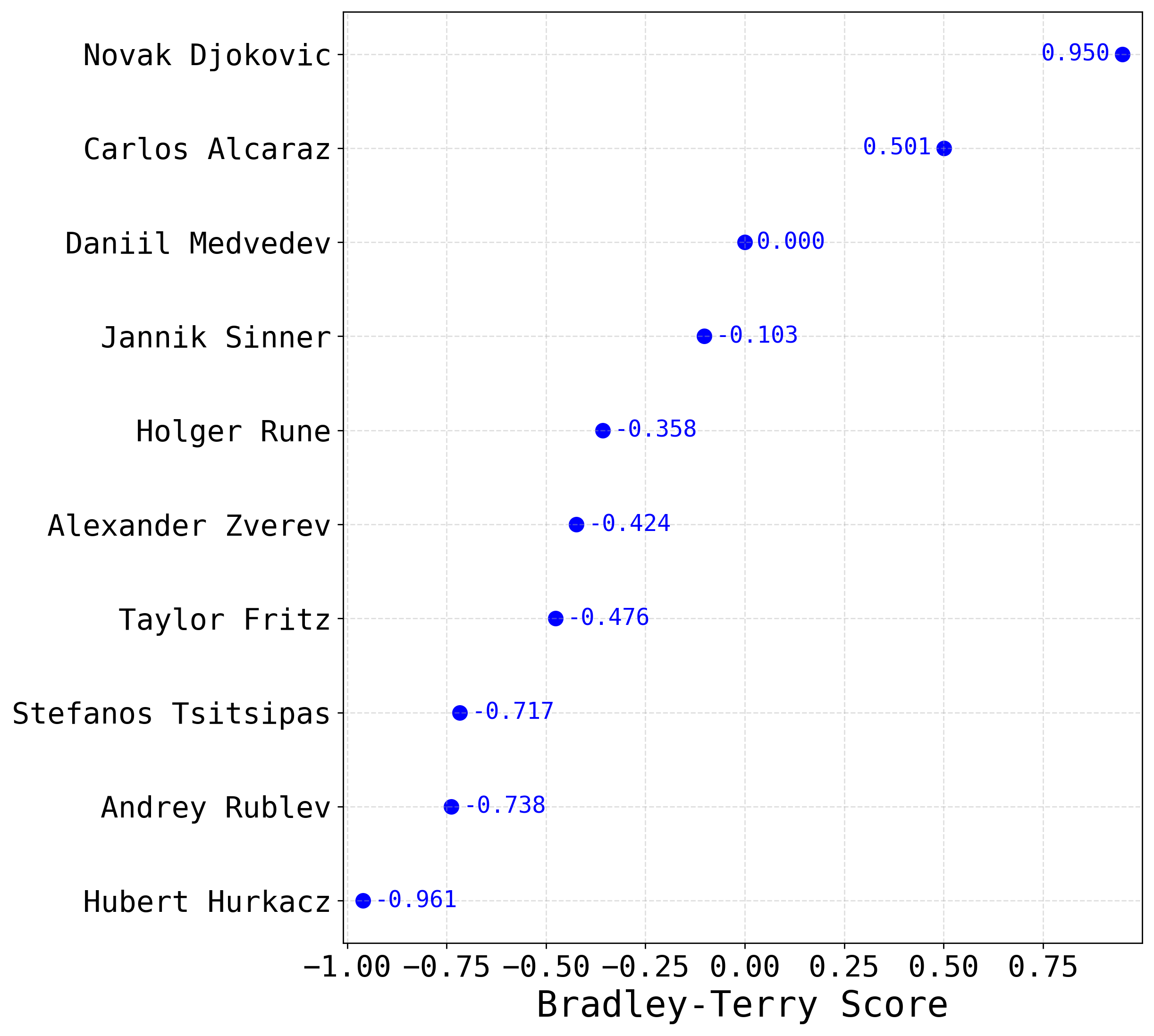}
    \caption{The top-10 player rankings in the tennis data.}
    \label{fig:top_tennis}
\end{figure}

\textbf{NBA.} Basketball games from all seasons of the National Basketball Association (NBA), consolidated by \citet{nbaelofivethirtyeight}. Each entry represents a historical game from the National Basketball Association, consisting of the identifiers of the two teams, the outcome of the game (win or loss), as well as game-related metadata (e.g., Elo score of each team, game location). To avoid the case where dropping a small proportion of matches could drop a player's entire record, we focus our analysis on the top 50 teams by number of games played. There are a total of 109,892 matchups between the 50 teams. Figure~\ref{fig:top20-nba} presents the BT scores of the top teams in the NBA.
\begin{figure}[t]
    \centering
    \includegraphics[width=0.7\linewidth]{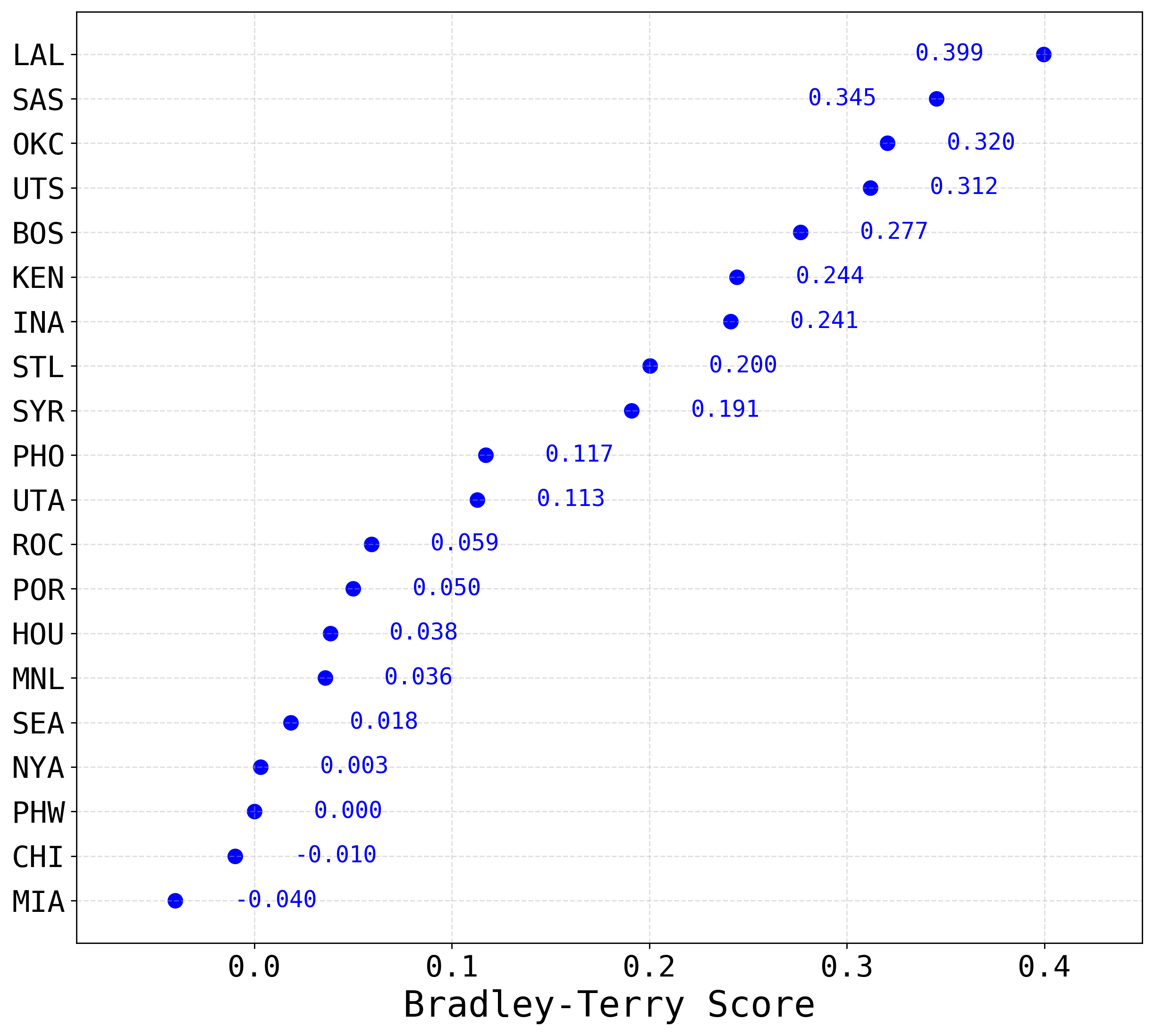}
    \caption{The top-20 team rankings in the NBA.}
    \label{fig:top20-nba}
\end{figure}

\section{\revision{Player Involvement, Homogeneous Bars}}
\label{sec:player-involvement-supplementals}
\revision{Across all top-$\topk$ robustness experiments, $100\%$ of dropped matches involved either one or both of the models whose rankings were flipped, with $100\%$ belonging to one of these two cases within a given $\topk$ (see \cref{fig:player_involvement_comparison}). There are no partial bars or mixed compositions. Readers may ask: Why does this homogeneous pattern consistently appear? Could this be a property of the arena data?}

\revision{We investigate this by manually inspecting the dropped matchups returned by our robustness assessing algorithm for each value of $\topk$. Specifically, in each case, we identified the dropped matchups and inspected which players appeared in these matchups. We summarize the findings here:}
\begin{itemize}
    \item \revision{$\topk=1$: 2 games were dropped to flip GPT-4-0125-preview (originally 1st) and GPT-4-1106-preview (2nd). These two matches were between GPT-4-0125-preview and two other models, vicuna-13b (22nd) and stripedhyena-nous-7b (45th), with GPT-4-0125-preview losing.}
    \item \revision{$\topk=3$: 29 games were dropped to flip models gpt-4-0314 (3rd place) with mistral-7b-instruct-v0.2 (6th place). Games were played between mistral-7b-instruct-v0.2 and various other models, with mistral-7b-instruct-v0.2 losing all matches.}
    \item \revision{$\topk=5$: 3 games were dropped to flip models qwen1.5-72b-chat (5th place) with mistral-medium (6th place). All dropped matches were between qwen1.5-72b-chat and gpt-4-1106-preview (1st place), with qwen1.5-72b-chat (5th place) winning.}
    \item \revision{$\topk=10$: 1 game was dropped to flip models gemini-pro (10th) and mixtral-8x7b-instruct-v0.1 (11th place). The dropped match was between the two models, with gemini-pro winning.}
    \item \revision{$\topk=20$: 1 game was dropped to flip models gpt-3.5-turbo-0314 (20th place) with nous-hermes-2-mixtral-8x7b-dpo (21st place). The dropped match was between nous-hermes-2-mixtral-8x7b-dpo (21st place) and vicuna-13b (22st place), with nous-hermes-2-mixtral-8x7b-dpo losing.}
\end{itemize}
\revision{The reason the involvement is always entirely either one or both affected players is because all of the dropped matchups consist of games played between a central model and a specific competitor (or group of competitors) whose outcomes all favor or disfavor the specific model \rev{and every dropped preference was a clear win or loss (no ties), aligning in the direction required to flip the ranking. In other words, whenever a top-$\topk$ set changed due to the demotion of a model, all dropped matches were ones that the demoted model had originally won, and vice versa for promotions.} This structure then leads the dropped matchups to consist entirely of evaluations that involved one or both ranking-flipped models. This finding reveals something interesting about the nature of the non-robustness in our analysis: small, consistent sets of matchups are sufficient to push a model just above or below another on the leaderboard.}


For every instance where the top-$\topk$ leaderboard changes due to dropped preferences, we find that the affected matches always involve at least one of the models whose rank is altered (see \cref{fig:player_involvement_comparison}). This holds true for both human-judged and LLM-judged Chatbot Arenas. While \citet{min2025improving} find that adding in a small fraction of rigged votes can influence a target model’s ranking even when the target model is not directly involved in the rigged votes, we are unable to find instances where rankings were flipped by removing a small fraction of preferences where neither of the affected models were involved.

Also, notice in \cref{fig:player_involvement_comparison} that there are no partial bars or mixed compositions. We investigate why this homogeneous pattern appears consistently across bars. Inspecting dropped matchups manually, we find that the reason why one or both flipped players are always involved in the dropped matchups is because these matchups are always played between the model that is flipped, call it the target model, and a specific competitor (either the model whose ranking is flipped relative to the target model, or another model) or group of competitors (including models whose rankings remain unchanged), and all matchups either always favor or disfavor the target model (see \cref{sec:player-involvement-supplementals} for a more detailed description).
This finding reveals something about how non-robustness appears in our analyses: small, consistent sets of matchups are sufficient to push a model just above or below another on the leaderboard.

\begin{figure}[!ht]
    \centering

    \begin{subfigure}[t]{0.49\textwidth}
        \centering
        \includegraphics[width=\linewidth]{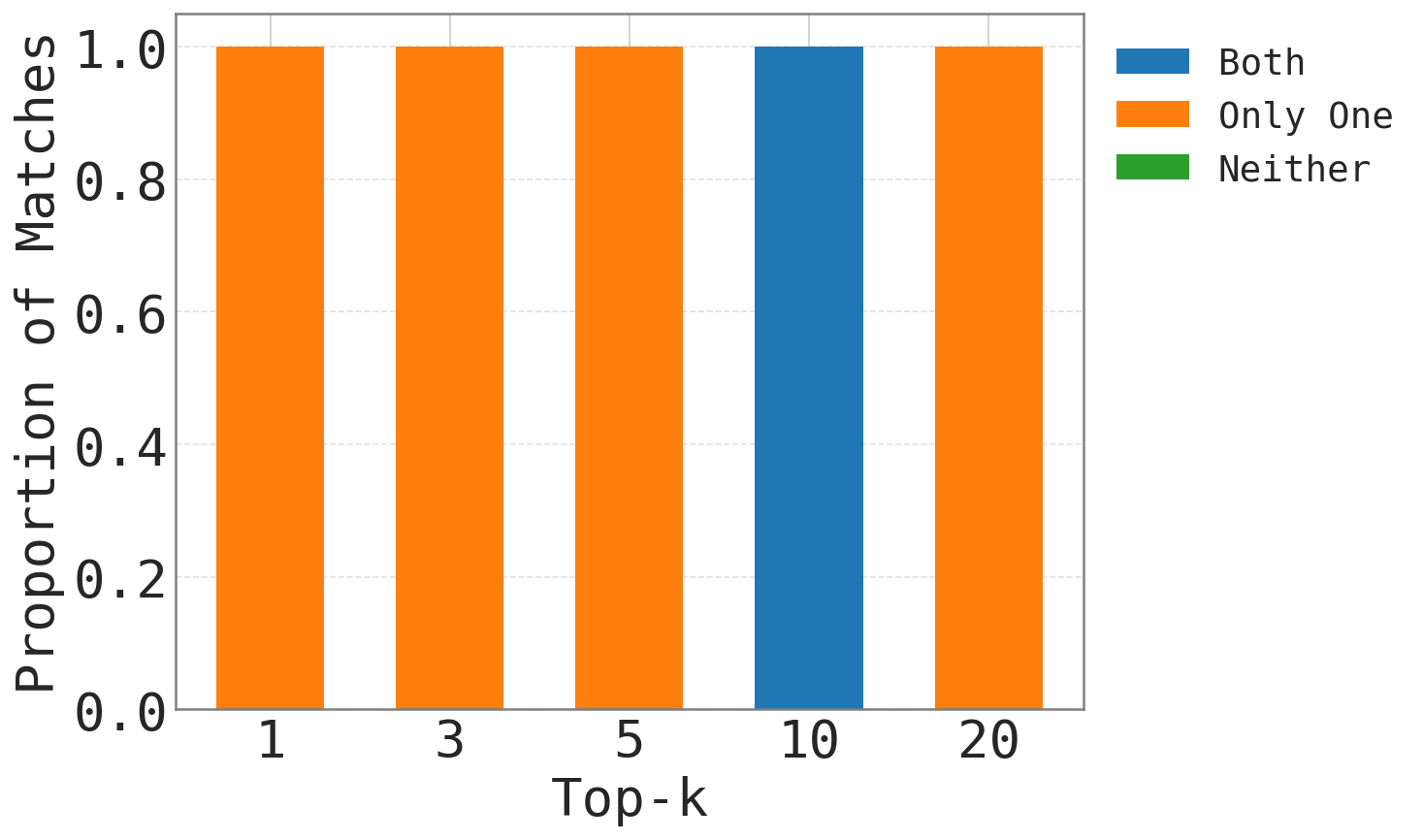}
        \caption{Chatbot Arena (Human-Judge)}
        \label{fig:chatbot_arena}
    \end{subfigure}
    \hfill
    \begin{subfigure}[t]{0.49\textwidth}
        \centering
        \includegraphics[width=\linewidth]{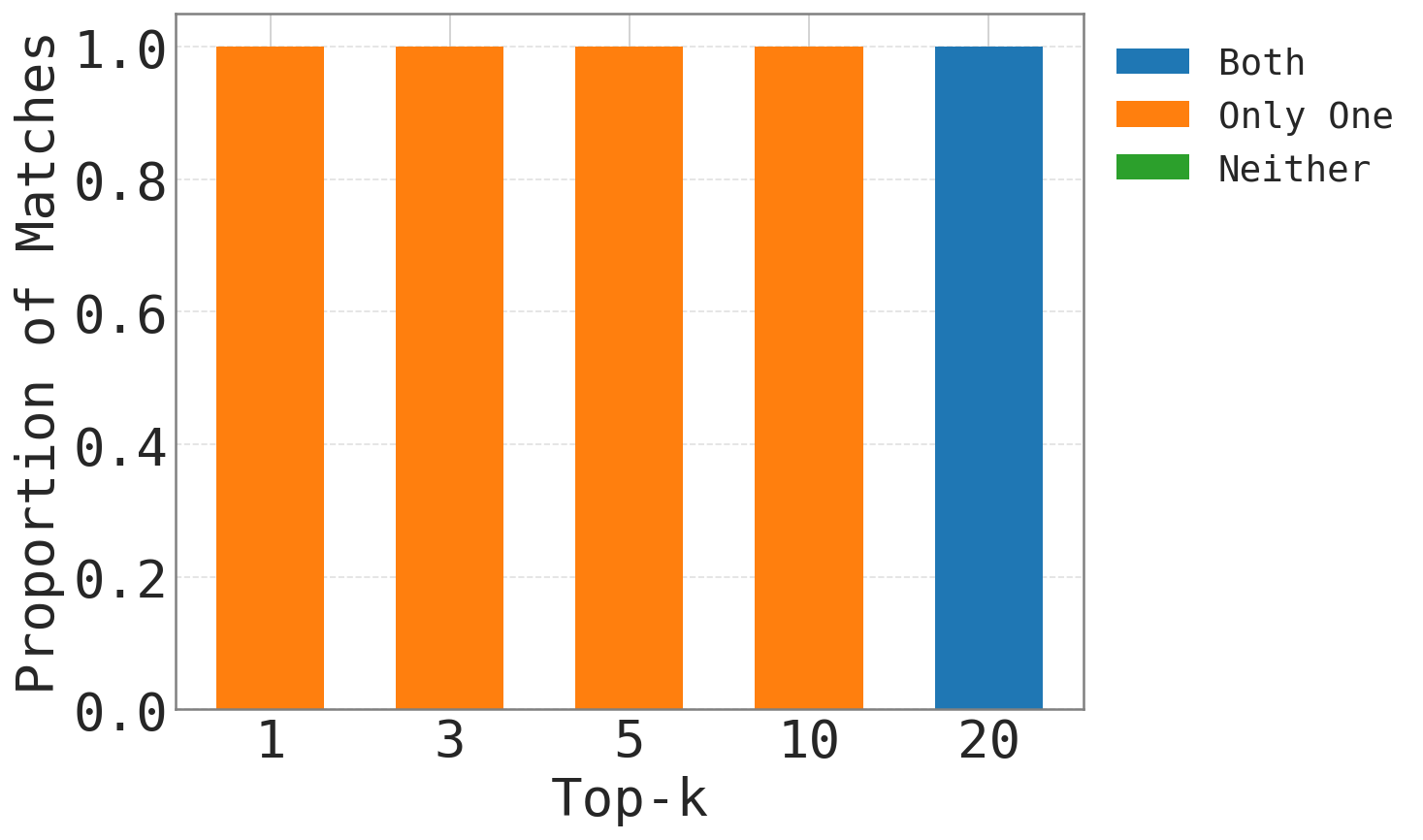}
        \caption{Chatbot Arena (LLM-Judge)}
        \label{fig:llm_arena}
    \end{subfigure}

    \caption{\textit{Player involvement} in the most influential matches whose removal caused two models (players), one inside the top-$\topk$ positions and one outside, to exchange places. Each bar represents the composition of matches dropped in order to exchange the models. The proportions indicate whether neither (green), one (orange), or both (blue) of the affected models were involved in each dropped matchup. For Chatbot Arena (Human-Judge) (top), the number of matches dropped for each $k \in \{1, 3, 5, 10, 20\}$ is $(2,29,3,1,1)$, respectively. For Chatbot Arena (LLM-Judge) (bottom), the number of matches dropped for each $k \in \{1, 3, 5, 10, 20\}$ is $(9,24,9,1,2)$. Across all $k$, we find that $100\%$ of flipped rankings involved matches containing at least one of the players whose ranking changed.}
    \label{fig:player_involvement_comparison}
\end{figure}

\section{Inspecting Dropped Preferences}
\label{sec:dropped-preferences}
Dropping just two (\(0.003\%\) of) preferences on Chatbot Arena is enough to change the top-ranked model on Chatbot Arena from GPT-4-1106-preview to GPT-4-0125-preview. Below we provide the two prompts and response pairs responsible for the ranking flip, together with the corresponding annotation. \rev{Additionally, we note that the BT model’s estimation procedure does not use any information about prompt content; it depends only on the BT scores of the two models involved in each match, which means it is possible for dropped subsets to be non-unique.}

\paragraph{Prompt 1.} ``teach me how to make independent classes in python that can be call and install through pip.''

\paragraph{Competing Models.} GPT-4-0125-preview, Vicuna-13b

\paragraph{Preferred Model.} \textit{Vicuna-13b.}

\paragraph{Responses.} See \cref{fig:prompt-1-gpt} for the complete response by GPT-4-0125-preview and \cref{fig:prompt-1-vicuna} for the complete response by Vicuna-13b.




\begin{figure}[b]
    \centering
    \includegraphics[width=0.85\linewidth]{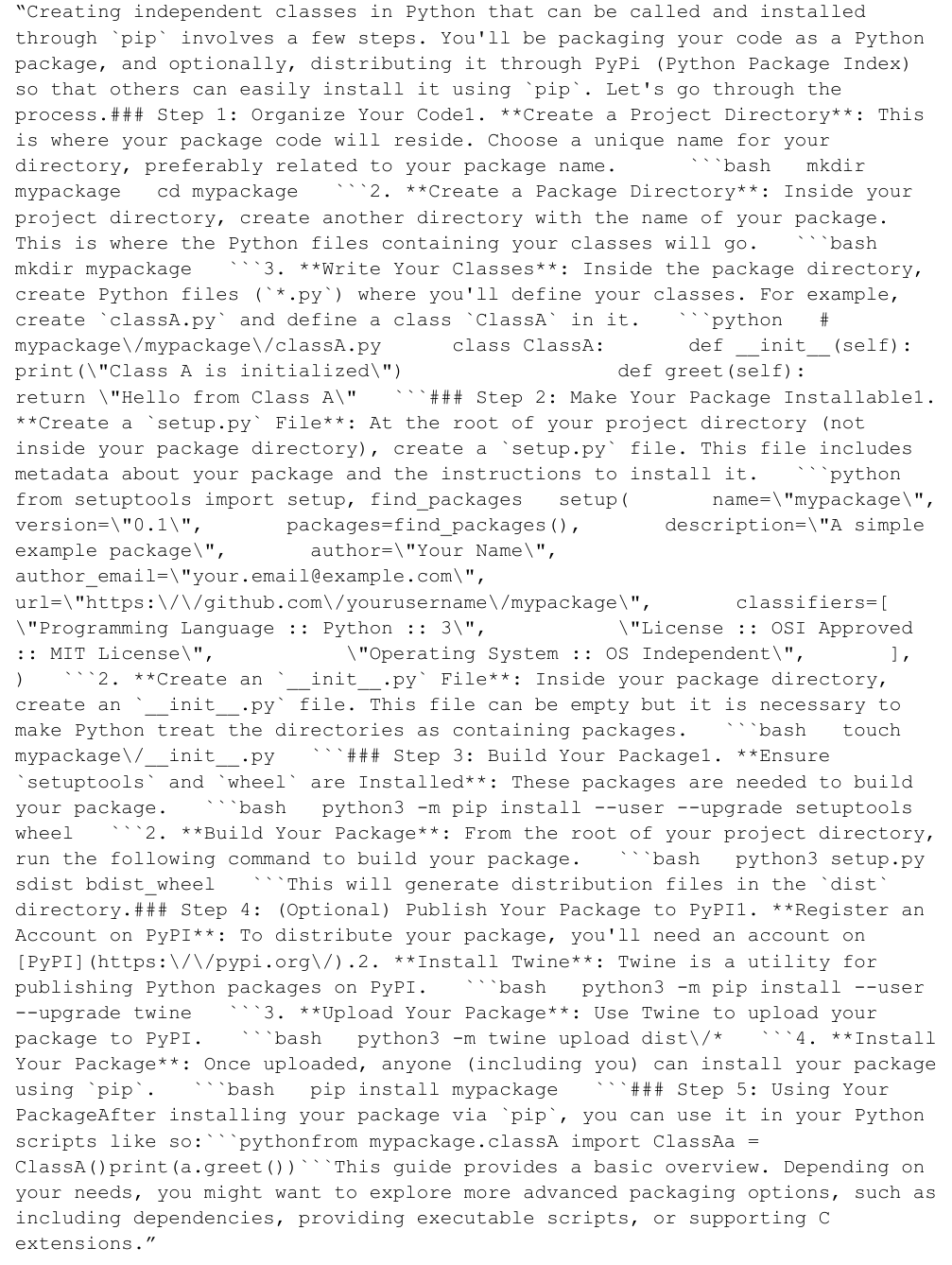}
    \caption{GPT-4-0125-preview: response to prompt 1.}
    \label{fig:prompt-1-gpt}
\end{figure}

\begin{figure}[b]
    \centering
    \includegraphics[width=0.85\linewidth]{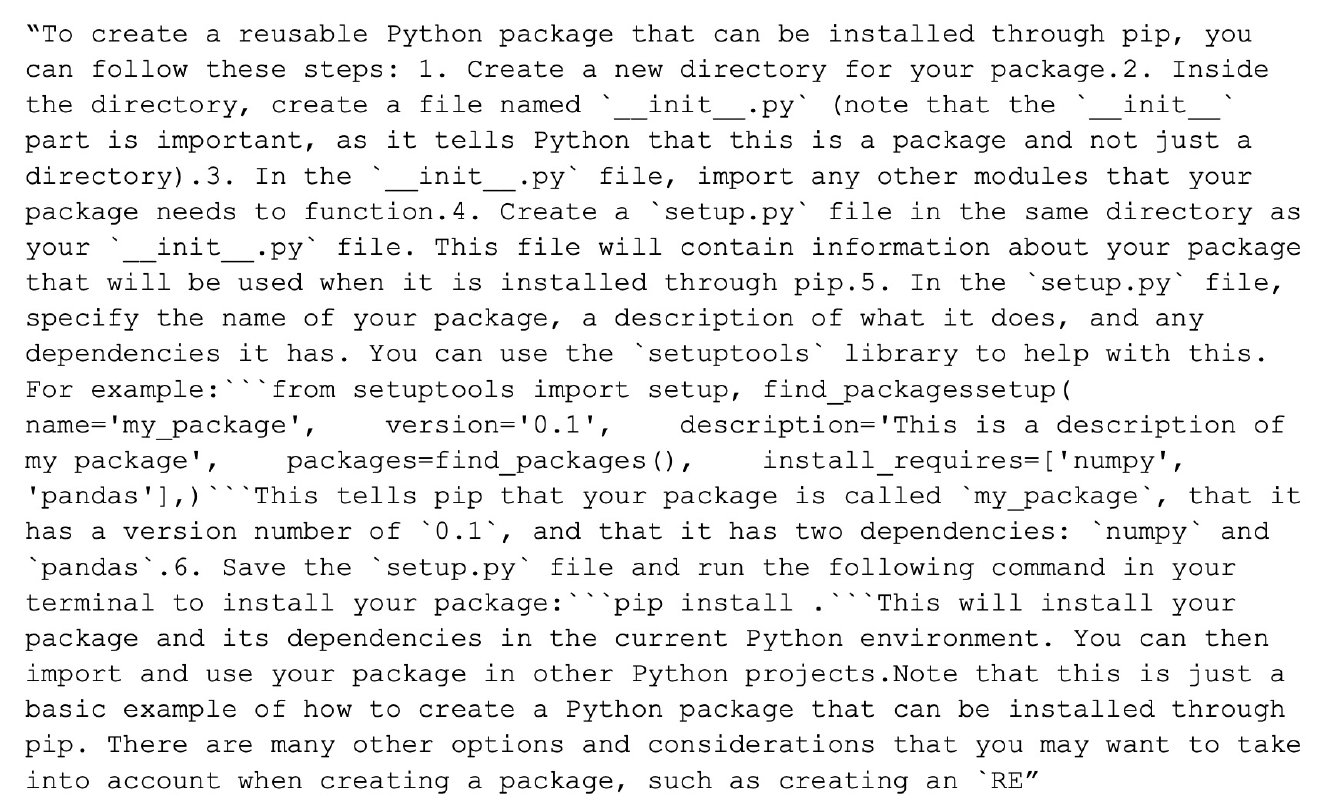}
    \caption{Vicuna-13b: response to prompt 1.}
    \label{fig:prompt-1-vicuna}
\end{figure}

\paragraph{Prompt 2.} ``name me challenging c++ projects i can add on my cv as a cs student. give me creative ideas and dont forget im a student.''

\paragraph{Competing Models.} GPT-4-0125-preview, Stripedhyena-nous-7b

\paragraph{Preferred Model.} \textit{Stripedhyena-nous-7b.}

\paragraph{Responses.} See \cref{fig:prompt-2-gpt} for the complete response by GPT-4-0125-preview and \cref{fig:prompt-2-hyena} for the complete response by Stripedhyena-nous-7b.




\begin{figure}[b]
    \centering
    \includegraphics[width=0.85\linewidth]{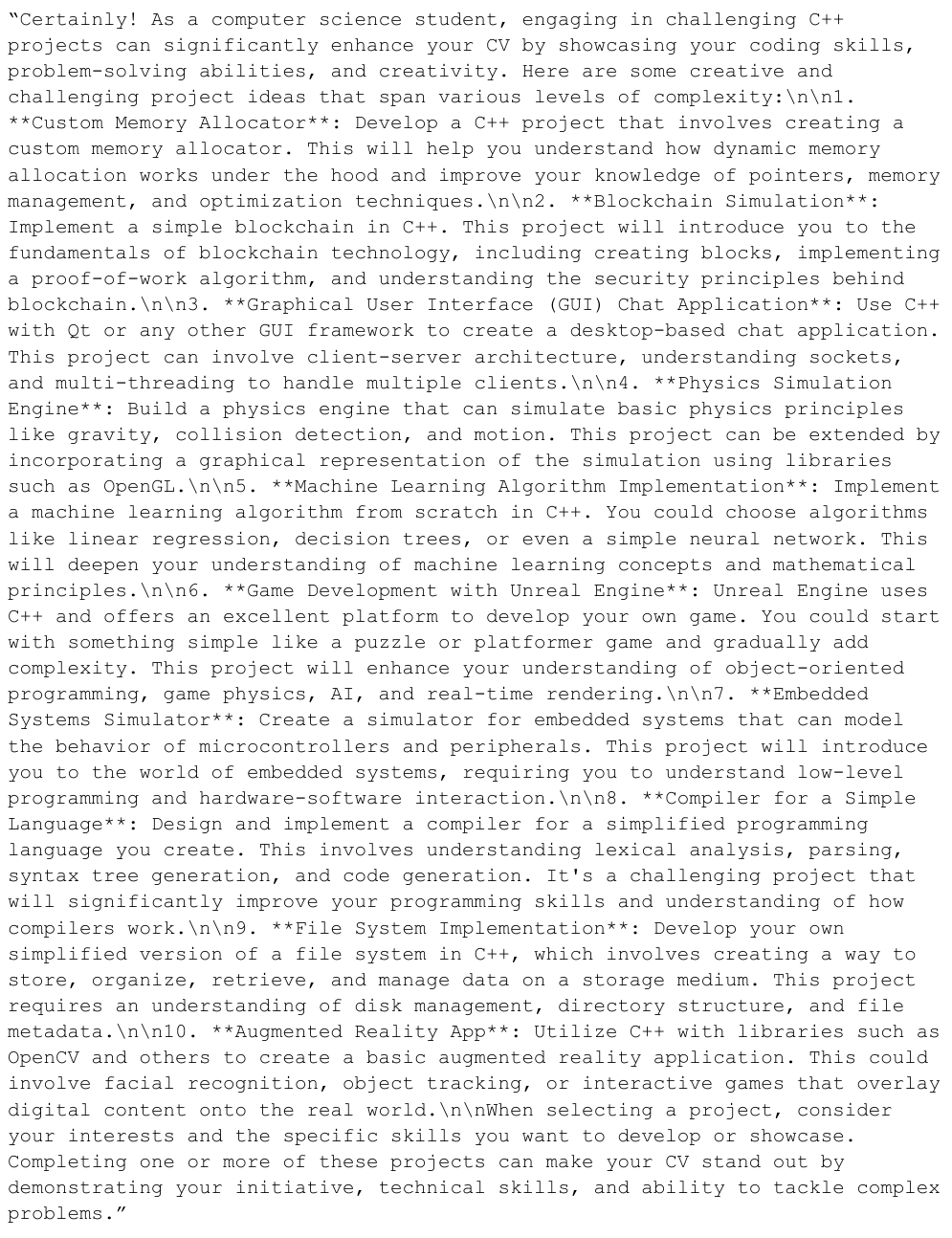}
    \caption{GPT-4-0125-preview: response to prompt 2.}
    \label{fig:prompt-2-gpt}
\end{figure}

\begin{figure}[b]
    \centering
    \includegraphics[width=0.85\linewidth]{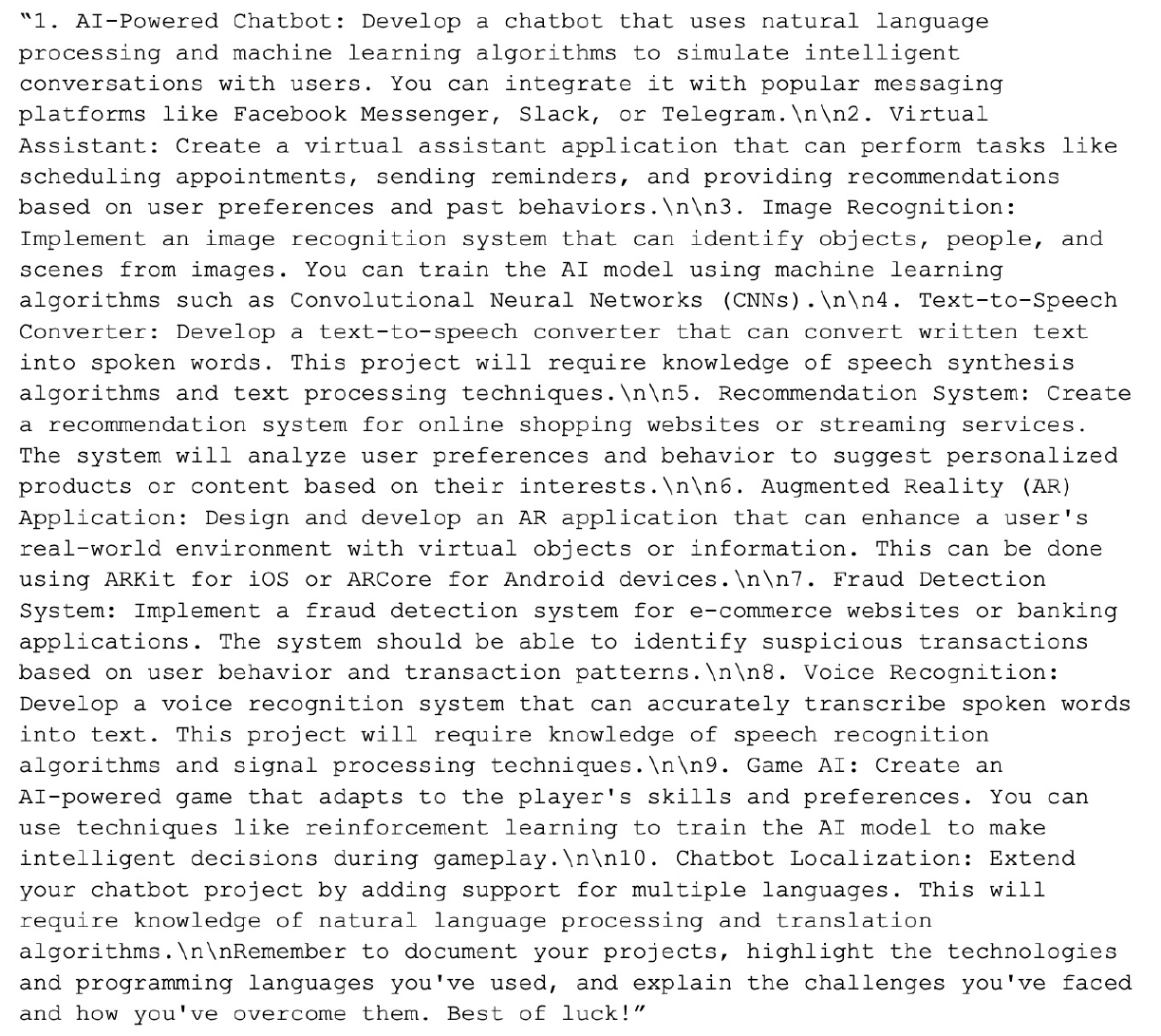}
    \caption{Stripedhyena-nous-7b: response to prompt 2.}
    \label{fig:prompt-2-hyena}
\end{figure}

\rev{To conduct a qualitative analysis on the response pairs, we provide a strong judge model, GPT-5.1\footnote{\rev{We accessed GPT-5.1 through the ChatGPT user interface in a logged-out, incognito session.}}, with a system prompt requesting a summary of the two responses, their similarities and differences, and a difficulty rating indicating how distinct the responses are.}

\paragraph{\rev{System Prompt}}
\begin{quote}
\rev{Provide a summary of the following two AI-generated responses to the prompt: [Insert Prompt Here]. Then describe their main similarities and differences. Finally, rate on a scale from 1 (very similar) to 5 (one response clearly better) how difficult it would be for a user to distinguish the two responses.}

\vspace{0.5em}
\rev{\textbf{Response A:} [Insert Response A here]}

\rev{\textbf{Response B:} [Insert Response B here]}
\vspace{0.5em}

\rev{Provide 2-3 sentences for each of the following headings: summary, similarities, differences, and difficulty rating.}
\end{quote}

\rev{In the following GPT-5.1 responses, Response A refers to GPT-4-0125-preview, and Response B refers to the smaller, open-source models, Vicuna-13b and Stripedhyena-nous-7b, respectively for prompts 1 and 2.}

\paragraph{\rev{GPT-5.1-Generated Response [for Prompt 1]}}

\begin{quote}
\rev{\paragraph{Summary.} Response A provides a detailed, step-by-step explanation of how to create Python classes, organize them into a package, build distribution files, and optionally upload the package to PyPI. It includes full code examples, terminal commands, and practical guidance for beginners. Response B offers a brief overview of how to create a package with init.py and setup.py, focusing mainly on local installation and basic configuration, and ends abruptly without deeper detail.}
\rev{\paragraph{Similarities.} Both responses explain that Python packages require an init.py file and a setup.py file using setuptools. They also describe how a user can install the finished package with pip. Each provides an example of what a setup.py file might look like.}
\rev{\paragraph{Differences.} Response A is much more comprehensive, guiding the reader through class creation, directory structure, building distribution files, and publishing to PyPI, while Response B offers only a high-level outline of the process. Response A includes practical, runnable examples, whereas Response B only sketches the idea of the setup. Response B also feels incomplete and omits several important steps that Response A covers thoroughly.}
\rev{\paragraph{Difficulty Rating.} On a scale of 1 to 5, where 1 means the responses are very similar and 5 means one is clearly better, the two responses would rate around a 4.5. A typical user would likely find Response A significantly more helpful, complete, and actionable than Response B.}
\end{quote}

\paragraph{\rev{GPT-5.1-Generated Response [for Prompt 2]}}

\begin{quote}
\rev{\paragraph{Summary.} Response A focuses on C++-specific, systems-oriented projects such as memory allocators, compilers, and physics engines, emphasizing core CS concepts and low-level programming. Response B centers on AI- and application-focused ideas like chatbots, image recognition, and recommendation systems, many of which are not inherently tied to C++.}
\rev{\paragraph{Similarities.} Both responses offer lists of challenging project ideas intended to help a CS student strengthen their CV. They also highlight projects that can demonstrate technical skills and encourage documenting the work effectively.}
\rev{\paragraph{Differences.} Response A is tightly aligned with the prompt, emphasizing C++ and foundational CS mechanics, whereas Response B presents more general software and AI projects that may rely on other languages or frameworks. Additionally, Response A provides deeper technical context, while Response B takes a more high-level, application-oriented approach.}
\rev{\paragraph{Difficulty Rating.} Rating: 5 — The two responses differ clearly in focus, depth, and C++ relevance, making them easy to distinguish. A typical user would quickly notice that only Response A directly addresses the C++-project requirement.}
\end{quote}

\rev{In both surfaced examples, GPT-5.1 judges the pair of responses to be easy to differentiate and consistently prefers the opposite response from the human annotator (e.g., ``A typical user would likely find Response A significantly more helpful, complete, and actionable than Response B,'' and ``A typical user would quickly notice that only Response A directly addresses the C++-project requirement.''). This makes sense, as both matches are cases in which a much lower-scoring model is preferred to the top-ranked model. Thus, one might interpret the influential subsets the method identifies as ``outlier’’ preferences, cases where the annotator’s preference deviates from what the average user might select.}

\subsection{Sensitivity Driven by Narrow Score Margins}\label{sec:experiments:score_margins}

We find that the stability of the arena depends on the BT score margins between models (see \cref{fig:robustness-vs-score-gap}). Recall from \cref{table:nonrobustness-table} that 
dropping only two preferences is enough to change the top-ranked model. To explore the effect of score margins, we first remove all games involving the second-place model (GPT-4-1106-preview). The arena then becomes more resilient, requiring dropping 38 out of $57{,}477$ ($0.07\%$) preferences to overturn the leader. When we further remove all games involving the 2nd through 5th place models, the leaderboard becomes harder to perturb, but is still remarkably sensitive, requiring dropping 63 out of $57{,}477$ ($0.1\%$) preferences to flip the top model.



One possible explanation for this fragility is that top competitors are often closely matched, making it difficult for annotators to reliably separate their performance on the prompts submitted to the arena. This raises the possibility that sensitivity could be reduced by that sharpens distinctions between models (for example, through expert annotators and curated prompts targeting challenging domains such as mathematics, coding, and multi-turn reasoning, as in MT-Bench \citep{zheng2023judging}).

\section{\rev{Verification of AMIP-identified Subsets}}
\label{sec:w-vec-verification}

\rev{For each of our nine data analyses, we can use the machinery of our method to return a $\wvec$ corresponding to a smallest data subset that can be dropped to change the top-1 ranking. The machinery of our method also returns an estimate (before re-running the BT model) for whether the top-1 ranking is changed. To examine how often an identified weight vector $\wvec$ truly corresponds to a subset whose removal flips the ranking, we report across all nine arenas the number of cases where the estimate with the $\wvec$-vector accurately reflects a change in the top ranking over the total number of arenas tested for top-1 robustness.}

\rev{In \cref{table:verify-amip-subsets}, we find that all identified $\wvec$-vectors lead to a true change in ranking. And we find that this result holds even when the dropped subset is greater than $\floor{\alpha N}$ of the data (even though the original AMIP makes no claims to an accurate identification of a decision-changing $\wvec$ in this case). However, this does not mean that there are no cases where AMIP fails to surface a vector $\wvec$ that leads to a change in ranking (i.e., false negatives are possible).}


\begin{table}[ht]
\centering
\small
\begin{tabular}{p{3.5cm} p{8.5cm} c}
\toprule
\textbf{Dataset} & \textbf{AMIP-Returned Subset (Indices)} & \textbf{Flip?} \\
\midrule

Chatbot Arena &
\{46592, 5156\} &
Yes \\

Vision Arena &
\{22176, 9686, 887, 15782, 24340, 25110, 9816, 10926, 18732, 21303, 13957, 2934, 2936, 19600, 11072, 15311, 11038, 25845, 17732, 29100, 5421, 24462, 23006, 10572, 2134, 13518, 5390, 15353\} &
Yes \\

NBA Games &
\{18819, 19717, 18818, 19762, 14523, 19763, 14522, 20900, 22132, 22133, 18305, 15756, 14383, 18304, 14382, 19716, 20135\} &
Yes \\

Chatbot Arena (LLM-judge) &
\{41445, 9108, 14834, 11144, 11675, 9123, 17291, 48894, 42411\} &
Yes \\

Webdev Arena &
\{7164, 7539, 9112, 7711, 2089, 1815, 2414, 6542, 6446, 4883, 8753, 2889, 9272, 3553, 1512, 5933, 6992, 10387\} &
Yes \\

Search Arena &
\{22164, 12847, 12819, 21810, 11852, 19956, 9492, 15447, 11324, 16583, 12733, 10116, 21940, 15552, 9451, 12602, 21977, 11499, 12576, 10146, 12557, 11519, 15699, 9420, 12851, 18068, 12931, 11278, 13279, 11143, 11163, 21587, 9963, 13226, 9586, 20632, 13191, 9978, 13189, 12456, 11204, 17160, 13129, 18238, 18231, 10009, 13112, 15234, 11251, 20575, 13043, 10030, 11209, 9607, 20336, 15733, 22646, 12061, 11768, 12023, 10375\} &
Yes \\

MT-bench (LLM-judge) &
\{646, 587, 1290, 1741, 720, 570, 571, 72, 223, 1212, 1183, 1122, 2052, 2053, 2112, 1242, 1063, 1033, 1032, 1003, 1812, 2113, 1002, 282, 1093, 1092, 1243, 2022, 1753, 1752, 132, 103, 102, 1872, 1873, 1543, 162, 1453, 1423, 1422\} &
Yes \\

ATP Tennis &
\{236, 168, 251, 177, 202, 122\} &
Yes \\

MT-bench (Human-judge) &
\{137, 2399, 1298, 1884, 2398, 139, 1153, 850, 391, 1111, 3181, 91, 648, 2612, 803, 802, 804, 801, 800, 348, 744, 41, 2726, 349, 2668, 608, 607, 1450, 799, 2909, 1409, 2912, 2725, 748, 2492, 1537, 160, 1536, 2911, 1534, 925, 1535, 2333, 2161, 570, 1830, 346, 2334, 745, 1408, 1191, 2332, 3055, 101, 222, 2883, 3274, 221, 2837, 219, 667, 178, 3021, 3022, 1902, 2552, 2551, 2341, 863, 1124, 1903, 2624, 2626, 2627, 1634, 898, 1744, 2510, 1745, 220, 3275, 666, 1162, 246, 1214, 1294, 1165, 64, 247, 1556, 65, 3278\} &
Yes \\

\bottomrule
\end{tabular}
\caption{\rev{For each dataset, the number of cases where the estimate with the $\wvec$-vector accurately reflects a change in the top ranking) over the total number of arenas tested for top-1 robustness. All surfaced $\wvec$-vectors successfully flip the ranking (9/9).}}
\label{table:verify-amip-subsets}
\end{table}

\section{\rev{Masking Effects and the Possibility of False Negatives}}
\label{sec:false-negatives}

\rev{A main limitation of our approach is that while it can conclusively identify non-robustness, it is possible that there is non-robustness that it does not find: when our method surfaces a subset whose removal flips the ranking, the resulting perturbation is an exact, verifiable witness of fragility; however, when no such subset is found, we cannot conclude that the arena is robust.}

\rev{This limitation is a documented challenge in the literature on identifying influential subsets \citep{hu2024most,huang2024approximations,moitra2022provably}. In linear models, for example, \citet{huang2024approximations} and \citet{hu2024most} show that AMIP and related additive, first-order approximations can miss influential subsets. A key failure mode is due to a phenomenon is known as ``masking,'' in which several highly-impactful data points produce a large change to a statistic (e.g., an estimated BT-score) when deleted jointly, yet no single point appears influential when considered in isolation. To address masking effects, works such as \citet{belsley2005regression,kuschnig2021hidden,huang2024approximations} have considered using step-wise (greedy) approaches, of removing the most influential points in sequence. The main empirical conclusion of this paper relies on an existence proof: namely, that several widely used evaluation datasets exhibit substantial sensitivity to very small targeted deletions. For this reason, we do not pursue step-wise greedy variants here, though they remain an interesting direction for future work.}

\section{Non-Robustness of NBA Rankings}
\label{sec:nba-results}
To assess whether fragility of Bradley–Terry (BT) rankings extends beyond LLM arenas, we applied our method to historical NBA data. The degree of non-robustness in NBA rankings is comparable to that observed in Chatbot Arena: both require removing $<0.05\%$ of matchups to alter the top spot. In both cases, the explanation may trace back to small BT-score gaps at the top of the leaderboard (See \cref{fig:top20-nba}). One plausible explanation is that the skill levels among elite NBA teams are tightly clustered, and so any apparent differences in skill level may not be substantive.

This non-robustness in the NBA datasets suggests a broader conclusion that BT-based evaluation systems--whether in AI benchmarking or sports--tend to be unstable when the margin between competitors is narrow.

\section{Additional Supporting Figures}
\label{sec:added-visualizations}
The figures in this section provide additional insights related to our analysis. Figure~\ref{fig:model_occurrences_cba} shows the distribution of model appearances in Chatbot Arena, respectively, revealing differences in evaluation density and coverage across platforms. Figure~\ref{fig:robustness-vs-score-gap} illustrates the relationship between the robustness of model rankings and the BT score gap between adjacent models, confirming that small score differences tend to coincide with greater sensitivity to worst-case data-dropping.
\label{sec:summary-stats-appendix}
\begin{figure}[ht]
    \centering
    \includegraphics[width=0.65\linewidth]{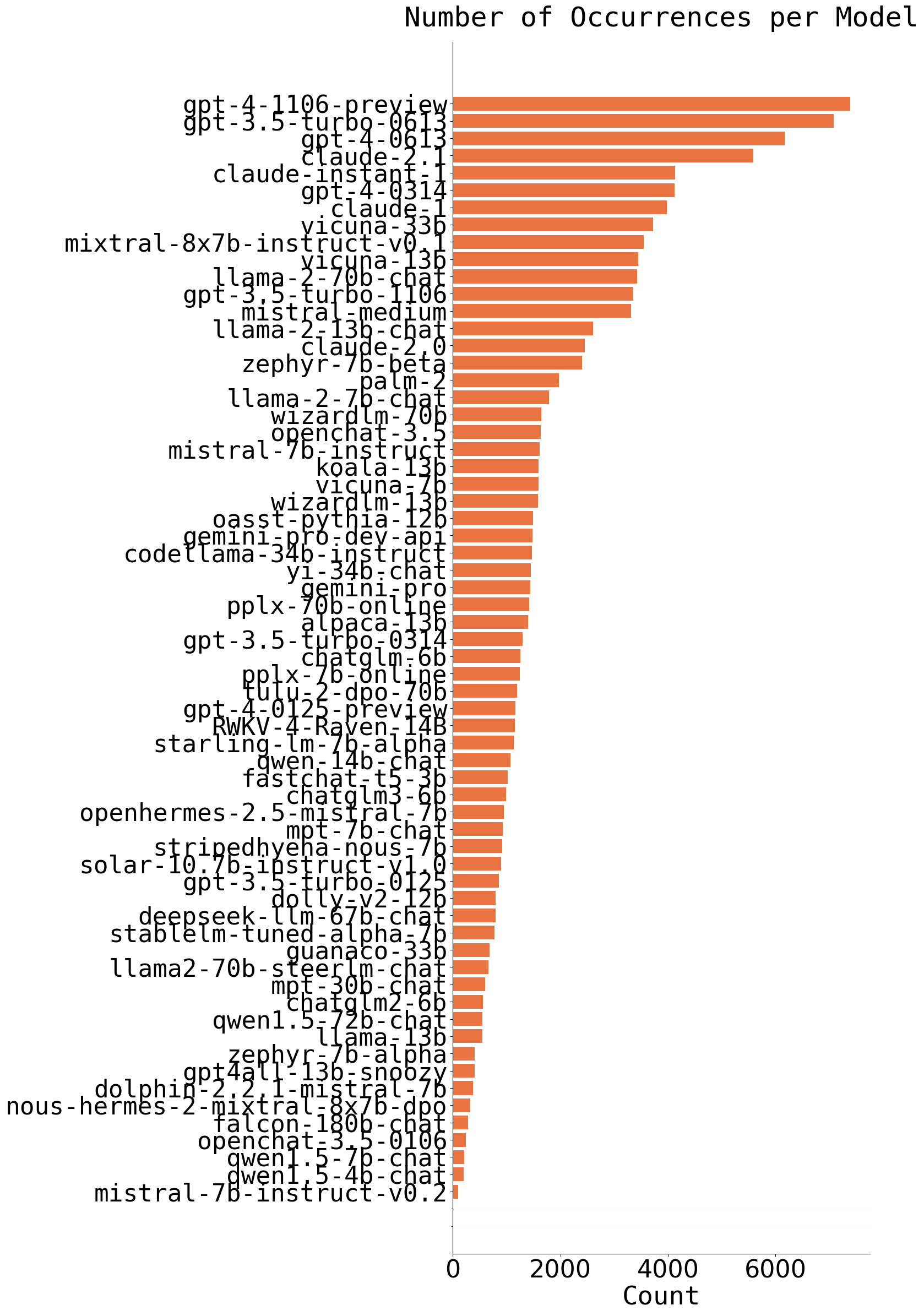}
    \caption{\revision{The number of times each model appears in a match in Chatbot Arena.} The horizontal bar chart shows how frequently each model appeared \revision{in any match}, with GPT-4 and GPT-3.5 variants being the most represented.}
    \label{fig:model_occurrences_cba}
\end{figure}
\begin{figure}[ht]
    \centering
    \includegraphics[width=0.80\linewidth]{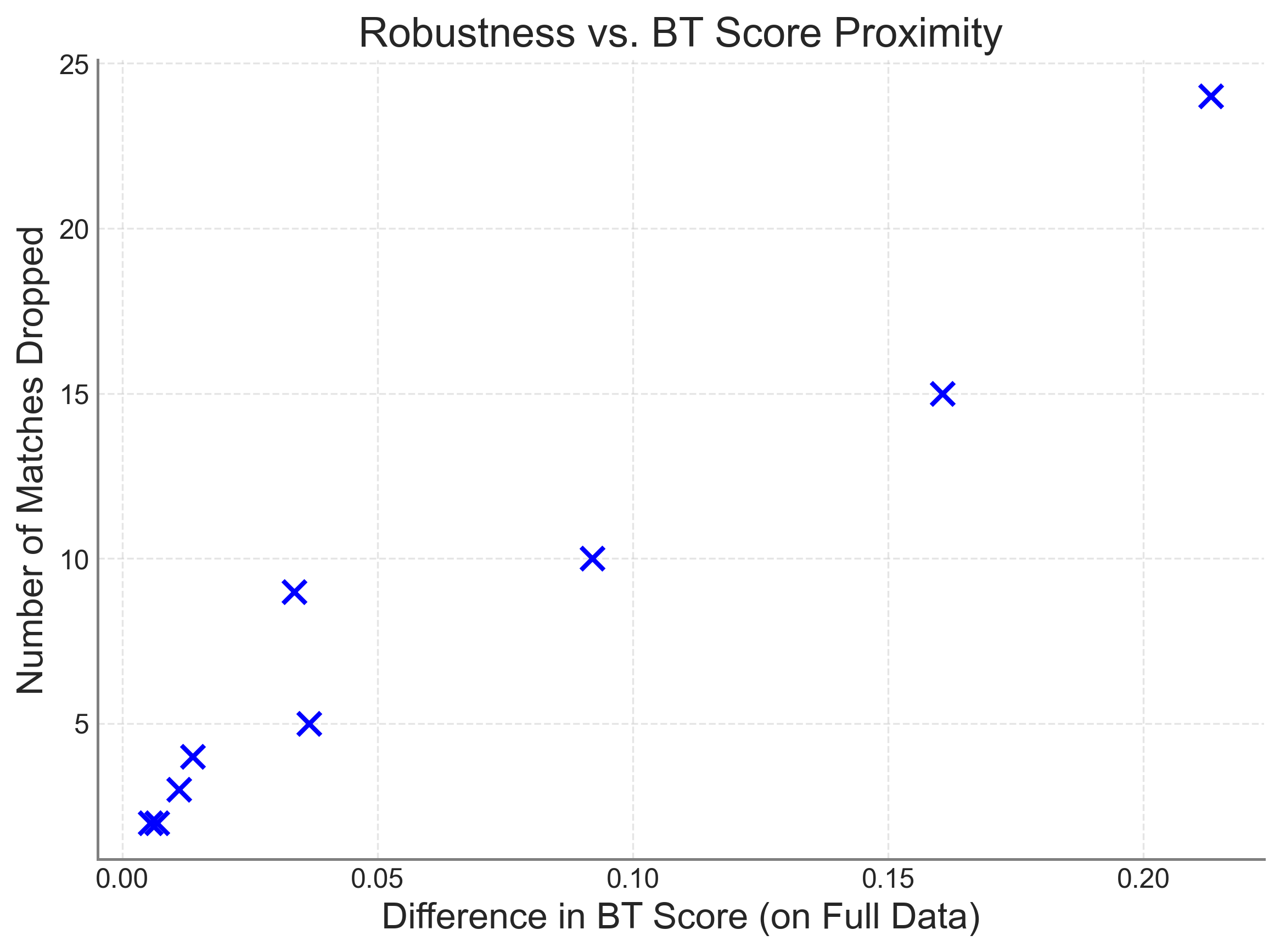}
    \caption{Robustness of results is correlated with the proximity of the BT scores. Each point represents a pair of models whose relative rankings flipped after dropping a small fraction of matchups. In every case, the flip causes one model to enter the top-k rankings (for some $k \in \{1,3,5,10,20\}$) while the other is demoted. These points are taken from both human and LLM-as-a-judge evaluation platforms.}
    \label{fig:robustness-vs-score-gap}
\end{figure}

\section{Large language model (LLM) use}
We used LLMs for grammar checks and to polish writing, to help find sports datasets that yielded the discovery of \citet{Sackmann2024TennisATP}, \rev{and as a judge model in the qualitative study described in \cref{sec:dropped-preferences}.} Although our study is about LLM rankings, we did not use LLMs as direct study subjects.

\end{document}